\definecolor{customcellcolor}{rgb}{1,0.8,0.8}
\DeclarePairedDelimiter\floor{\lfloor}{\rfloor}
\newtheorem{theorem}{Theorem}
\newtheorem{lemma}[theorem]{Lemma}
\newtheorem{proposition}[theorem]{Proposition}
\newtheorem{definition}[theorem]{Definition}
\newtheorem{remark}[theorem]{Remark}
\definecolor{orcidlogocol}{HTML}{A6CE39}
\tikzset{
  orcidlogo/.pic={
    \fill[orcidlogocol] svg{M256,128c0,70.7-57.3,128-128,128C57.3,256,0,198.7,0,128C0,57.3,57.3,0,128,0C198.7,0,256,57.3,256,128z};
    \fill[white] svg{M86.3,186.2H70.9V79.1h15.4v48.4V186.2z}
                 svg{M108.9,79.1h41.6c39.6,0,57,28.3,57,53.6c0,27.5-21.5,53.6-56.8,53.6h-41.8V79.1z M124.3,172.4h24.5c34.9,0,42.9-26.5,42.9-39.7c0-21.5-13.7-39.7-43.7-39.7h-23.7V172.4z}
                 svg{M88.7,56.8c0,5.5-4.5,10.1-10.1,10.1c-5.6,0-10.1-4.6-10.1-10.1c0-5.6,4.5-10.1,10.1-10.1C84.2,46.7,88.7,51.3,88.7,56.8z};
  }
}
\newcommand\orcidicon[1]{\href{https://orcid.org/#1}{\mbox{\scalerel*{
\begin{tikzpicture}[yscale=-1,transform shape]
\pic{orcidlogo};
\end{tikzpicture}
}{|}}}}
\begin{document}

\title{Machine Learning Predictors \\ for Min-Entropy Estimation}

\author{Javier Blanco-Romero$^{\orcidicon{0009-0004-0635-953X}}$,
        Vicente Lorenzo$^{\orcidicon{0000-0003-2077-6095}}$,
        Florina Almenares Mendoza$^{\orcidicon{0000-0002-5232-2031}}$,
        Daniel Díaz-Sánchez$^{\orcidicon{0000-0002-3323-6453}}$
\thanks{Javier Blanco-Romero is with the Department of Telematic Engineering, Universidad Carlos III de Madrid, Leganés, Madrid, 28911, Spain (e-mail: frblanco@pa.uc3m.es).}
\thanks{Vicente Lorenzo is with the Department of Telematic Engineering, Universidad Carlos III de Madrid, Leganés, Madrid, 28911, Spain (e-mail: vilorenz@pa.uc3m.es).}
\thanks{Florina Almenares Mendoza is with the Department of Telematic Engineering, Universidad Carlos III de Madrid, Leganés, Madrid, 28911, Spain (e-mail: florina@it.uc3m.es).}
\thanks{Daniel Díaz-Sánchez is with the Department of Telematic Engineering, Universidad Carlos III de Madrid, Leganés, Madrid, 28911, Spain (e-mail: dds@it.uc3m.es).}
}

\markboth{Journal of \LaTeX\ Class Files,~Vol.~14, No.~8, August~2021}%
{Shell \MakeLowercase{\textit{et al.}}: A Sample Article Using IEEEtran.cls for IEEE Journals}


\maketitle

\begin{abstract}
This study investigates the application of machine learning predictors for min-entropy estimation in Random Number Generators (RNGs), a key component in cryptographic applications where accurate entropy assessment is essential for cybersecurity. Our research indicates that these predictors, and indeed any predictor that leverages sequence correlations, primarily estimate average min-entropy, a metric not extensively studied in this context. We explore the relationship between average min-entropy and the traditional min-entropy, focusing on their dependence on the number of target bits being predicted. Utilizing data from Generalized Binary Autoregressive Models, a subset of Markov processes, we demonstrate that machine learning models (including a hybrid of convolutional and recurrent Long Short-Term Memory layers and the transformer-based GPT-2 model) outperform traditional NIST SP 800-90B predictors in certain scenarios. Our findings underscore the importance of considering the number of target bits in min-entropy assessment for RNGs and highlight the potential of machine learning approaches in enhancing entropy estimation techniques for improved cryptographic security.
\end{abstract}

\begin{IEEEkeywords}
Min-entropy Estimation, Machine Learning Predictors, Random Number Generators, Autoregressive Processes, Generalized Binary Autoregressive Models
\end{IEEEkeywords}

\section{Introduction}
\IEEEPARstart{T}{he} 
 security of cryptographic systems often hinges on the generation of random values. Although there is a broad spectrum of algorithms and devices used to generate these random values, they are all generically denoted by Random Number Generators (RNGs). Given the important role that RNGs play in the context of cybersecurity,
it becomes evident that rigorous criteria are necessary for evaluating the reliability and performance of an RNG.

Multiple approaches are commonly employed to assess the quality of the output of an RNG (cf. \cite{bassham2010sp}, \cite{marsaglia2008marsaglia}, \cite{abbott2019experimentally}, \cite{calude2010experimental}, \cite{kavulich2021searching}, \cite{bird2020effects}, etc.).
In this paper the emphasis will be put on:
\begin{itemize}
    \item \textbf{Entropy} tests, as those found in NIST Special Publication 800-90B \cite{turan2018recommendation}, which estimate the entropy of a noise source based on appropriate samples (cf. \cite{abraham2022high}, \cite{islam2022using}, \cite{li2020jitter}, etc.).

    \item \textbf{Machine Learning} models trained with the output of an RNG aiming to guess the bit or set of bits that  follow a given sequence, which can give an insight into how predictable the output of the RNG is
    (cf. \cite{truong2018machine}, \cite{yang2018neural}, \cite{lv2020high}, \cite{li2020deep}, \cite{feng2020testing}, \cite{li2023improvement}, etc.).
\end{itemize}

The fact that the entropy of a given source and the predictability of its output are correlated was already noticed by  Shannon \cite{shannon1951prediction}. Nevertheless, the link between these two concepts is far from being completely understood, specially if one takes into account the heterogeneity of entropy definitions that can be found in the literature and how much the predictability of the output of an entropy source relies on the predictor being considered. Building on the evidence provided by \cite{kelsey2015predictive} that the entropy estimators considered by NIST Special Publication 800-90B \cite{turan2018recommendation} tend to underestimate min-entropy, an attempt to reinforce the argument \cite{zhu2017analysis} that predictors are better suited to estimate average min-entropy \cite{dodis2004fuzzy} than min-entropy is carried out in this paper. In the first stage, the theoretical framework required to support our thesis is developed. In the second stage, experimental validation of the theoretical analysis is conducted.

Whereas \cite{kelsey2015predictive} concentrates on  Ensemble, Categorical Data, and Numerical predictors,  the focus of this paper will be on machine learning predictors. In particular, 
a hybrid model that integrates convolutional and recurrent Long Short-Term Memory (LSTM) layers and the transformer-based GPT-2 model will be considered. As in \cite{zhu2017analysis}, we generate sets of data for which a theoretical entropy can be calculated so that the machine learning entropy estimation can be compared to the theoretical value. Nevertheless, while the data generated in \cite{zhu2017analysis} comes from an oscillator-based model and Markov processes of order at most $2$, our data comes from Generalized Binary Autoregressive Models \cite{jentsch2019generalized}, a subclass of Markov chains  that allows us to easily parameterize correlations and anticorrelations at the bit level and compute min-entropies.

Our research also investigates the influence of the number of target bits on the estimation of min-entropy. We demonstrate that the relationship between average min-entropy and min-entropy is significantly affected by the number of target bits being predicted. This finding highlights the importance of considering the target bit count when assessing the min-entropy of RNGs using machine learning predictors.

The remainder of this paper is structured as follows: Section 2 presents a literature review, discussing the current state of the art in the application of predictors for min-entropy estimation. In Section 3, we establish the theoretical framework, where we study the concept of average min-entropy and its relationship with min-entropy, deriving a series of results for order-p Markov chains and gbAR(p). Section 4 outlines our experimental methodology, aimed at validating the theoretical findings. Section 5 presents the results of our experiments, followed by Section 6, which offers a discussion of these results and their implications. Finally, Section 7 concludes the paper with a summary of our findings and suggestions for future research.

\subsection{Notation and Conventions}
The following notation and conventions will be considered throughout this paper:
\begin{itemize}
    \item Random Variables: Uppercase letters \(X_1, X_2, A, \ldots\) represent random variables, while their corresponding realizations are represented by lowercase letters \(x_1, x_2, a, \ldots\) and by abusing the notation $P(A=a)$ will be denoted by $P(a)$. Furthermore, by $X\sim B(n,p)$ we mean that $X$ is a random variable that follows a binomial distribution with  number of trials $n$ and a success probability $p$, and by $(X_1,\ldots,X_k)\sim \text{Mult}(n;p_1,\ldots,p_k)$ we mean that $(X_1,\ldots,X_k)$ is a multivariate random variable that follows a multinomial distribution with number of trials $n$ and probability vector $(p_1,\ldots,p_k)$.
    \item Expected Values: The notation \(\langle \cdot \rangle\) is used to indicate expected values. Given discrete random variables $X_{t-p},\ldots, X_{t+n}$ where $t,p,n\in\mathbb{Z}$ and $p,n$ are non-negative, we will be particularly interested in the following type of expression:
    \begin{equation*}
      \resizebox{0.9\hsize}{!}{$
        \begin{aligned}
            \left\langle \max_{x_{t},\ldots,x_{t+n}} P( x_{t},\ldots,x_{t+n} \mid x_{t-1},\ldots,x_{t-p}) \right\rangle_{x_{t-1},\ldots,x_{t-p}}    =
        \\ \sum_{x_{t-1},\ldots, x_{t-p}}P(x_{t-1},\ldots, x_{t-p})\max_{x_t,\ldots,x_{t+n}}P(x_{t},\ldots, x_{t+n}\mid x_{t-1},\ldots, x_{t-p}).
        \end{aligned}
        $}
    \end{equation*}

    \item Logarithms: All logarithmic functions are considered to be base 2 and are denoted by \(\log\).
\end{itemize}

\section{State of the Art (Literature Review)}

\subsection{Entropies}

The relationship between entropy and the predictability of a sequence was first investigated by Shannon \cite{shannon1951prediction}, who noticed that 
the problem of prediction is fundamentally connected to the concept of entropy. Min-entropy, denoted as \(H_\infty(X)\), represents the negative logarithm of the probability of a correct guess on the random variable \(X\) under an optimal strategy \cite{cachin1997entropy}. Mathematically, the probability of guessing the most likely output of an entropy source can be expressed as:

\begin{equation*}
\label{min-entropy}
2^{-H_\infty(X)} = \max_{x \in X} P_X(x) \ .
\end{equation*}

In cryptography, min-entropy is an important measure, as it provides a conservative estimate of the difficulty of guessing or predicting the most likely output of the entropy source, as emphasized in the NIST Recommendation \cite{turan2018recommendation}.

Moreover, entropy estimation is complex when the output distribution is unknown, and typical assumptions like outputs being independent and identically distributed (i.i.d.) do not apply. Good entropy estimation needs understanding of the underlying nondeterministic process of the entropy source, and statistical tests, as referenced, can only act as a sanity check on such an estimate \cite{kelsey2015predictive}.

In this context, the concept of predictors has been introduced. As described by Kelsey et al., a predictor contains a dynamic model that operates through a four-step process: 1) Assume a probability model for the source, 2) Estimate the model's parameters from the input sequence on-the-fly, 3) Use these parameters to attempt to predict the still-unseen values in the input sequence, and 4) Estimate the min-entropy of the source from the performance of these predictions \cite{kelsey2015predictive}. Unlike traditional machine learning methods, this approach is parametric and relies on a model of the underlying probability distribution. Another difference from traditional supervised learning methods, which separate training and testing sets, is that predictors remain in the training phase indefinitely, allowing for continuous adaptation and improvement in prediction accuracy. Predictors are characterized by two primary performance metrics. The first, global predictability, gauges the long-term accuracy of predictions. Specifically, a predictor's global accuracy \( p_{\text{acc}} \) represents the probability that it will correctly predict a given sample from a noise source over an extended sequence, effectively measuring the percentage of correct predictions. The second, local predictability, emphasizes the length of the longest streak of correct predictions, becoming important when the source produces highly predictable outputs in short spurts. The final entropy estimate for a predictor is determined by the lesser value between the global and local entropy estimates, represented by  $\hat{H} = \min(\hat{H}_{\text{global}}, \hat{H}_{\text{local}})$ . 

Hence, predictors play a significant role in setting bounds on an attacker's performance, linking predictability to min-entropy. For a description of the evolution of the introduction of predictors on the NIST SP 800-90B see \cite{lv2020high}.

Zhu et al. examined the issue of underestimation in non-IID data pertaining to the NIST collision and compression test, proposing an enhanced method to address the underestimation of min-entropy \cite{zhu2017analysis}. They introduced a novel formula specifically aimed at the high-order Markov process, founded on the principles of conditional probability. Furthermore, they highlighted that the correct prediction probability within a predictor can also be understood as a form of conditional probability. 

Zhu's min-entropy formula for the Markov process can be related to the concept of average min-entropy as defined by Dodis \cite{dodis2004fuzzy}. Average min-entropy considers the predictability of a random variable given another possibly correlated random variable and can be expressed as:

\begin{equation*}
\label{average_conditional_min_entropy}
\begin{aligned}
    \tilde{H}_\infty(A \mid B) = - \log \left( \langle \max_{a} P( a \mid b) \rangle_{b} \right) = \\ - \log \left( \langle 2^{-H_\infty(A\mid B=b)}\rangle_{b} \right) \ ,
\end{aligned}
\end{equation*}

where $$H_\infty(A\mid B=b)=P(b)\cdot \max_a P(a\mid b).$$ Dodis' definition on average min-entropy offers valuable insights into the logarithm of predictability, presenting it as a “worst-case” entropy measure \cite{dodis2004fuzzy}. \\

Several works have considered the problem of designing machine learning predictors for the evaluating RNGs. Here, we examine some of the most relevant contributions in relation to min-entropy estimation.

Truong et al. \cite{truong2018machine} introduced the use of a recurrent convolutional neural network (RCNN) to analyze quantum random number generators (QRNGs). This RCNN model was employed to evaluate different stages of an optical continuous variable QRNG. The study focused on detecting inherent correlations, particularly under the influence of deterministic classical noise. Their methodology included a comprehensive analysis, from examining the robustness of QRNGs against machine learning attacks to benchmarking with a congruential pseudo-random number generator (CRNG). Their model's prediction accuracy was compared with the guessing probability of the data distribution, effectively entailing a comparison with the min-entropy.

Yang et al. and Lv et al. explored neural network-based min-entropy estimation for random number generators \cite{yang2018neural}, \cite{lv2020high}. Their approach involved training predictive models on simulated data, where the correct entropy was ascertainable due to the known output distributions. Additionally, their study included a performance analysis and comparison of their results with the NIST SP 800-90B's predictors, providing a detailed examination of the efficacy and accuracy of their neural network-based approach in entropy estimation.

Li et al. \cite{li2020deep} proposed a deep learning-based predictive analysis to assess the security of a non-deterministic random number generator (NRNG) using white chaos. They employed a temporal pattern attention (TPA)-based deep learning model to analyze data from both the chaotic external-cavity semiconductor laser (ECL) stage and the final output of the NRNG. The model effectively detected correlations in the ECL stage, but not in the post-processed output, suggesting the NRNG's resistance to predictive modeling. Prior to this, the model's predictive power was validated on a linear congruential algorithm-based RNG. The study also compared the model's prediction accuracy with the baseline probability, aligning with Truong et al.'s approach of using the guessing probability as a comparative metric for min-entropy estimation.

Finally, Haohao Li et al. \cite{li2023improvement} proposed a method for min-entropy evaluation using a pruned and quantized deep neural network. They developed a temporal pattern attention-based long short-term memory (TPA-LSTM) model, which they then optimized through pruning and quantization. This optimized model was retrained and tested on various simulated datasets with known min-entropy values. Their results demonstrated greater accuracy in min-entropy estimation compared to NIST SP 800-90B predictors. This study also investigated why NIST predictors often underestimate min-entropy, attributing it to the sensitivity of local predictability probability to parameter variations. This work parallels Yang et al. and Lv et al.'s in comparing neural network-based min-entropy estimations with NIST SP 800-90B's predictors.
\\

\subsection{Autoregressive Inference and Multi-Token Prediction Strategies}\label{multitoken_theory}

In autoregressive inference, various sampling strategies can be employed to generate sequences, such as greedy decoding \cite{radford2019language}, beam search \cite{vijayakumar2016diverse,shao2017generating}, top-k sampling \cite{fan2018hierarchical,radford2019language} or top-p/nucleus sampling \cite{holtzman2019curious}, with or without temperature-based sampling techniques \cite{ackley1985learning}. However, these techniques may not always yield the globally optimal sequence, as they rely on local decisions at each step.

Our goal is to approximate the global maximum probability for the complete sequence of $n$ bits, given the previous $p$ bits:

\begin{equation}
\max_{x_t, \ldots, x_{t+n}} P(x_t, \ldots, x_{t+n} \mid x_{t-1}, \ldots, x_{t-p}) \ .
\end{equation}

To illustrate the potential limitations of autoregressive inference strategies, let us consider greedy decoding as an example. Greedy decoding selects the most probable bit at each step, conditioned on the previously generated bits. This can be expressed as:

\begin{equation*}
 \resizebox{\hsize}{!}{
      $
\begin{aligned}
    \prod_{k=t}^{t+n} \max_{x_k} P(x_k \mid x_{k-1}, \ldots, x_{k-p}) \bigg\rvert_{
\begin{array}{c}
x_{k-i} = \arg\max_{x_{k-i}} P(x_{k-i} \mid x_{k-i-1}, \ldots, x_{k-i-p}), \\
\forall i \in [1, k \leq p]
\end{array} \ .
}
\end{aligned}
$}
\end{equation*}

However, the product of the maximum conditional probabilities at each step does not necessarily equal the global maximum probability over the entire sequence. In other words, the greedy decoding approach may lead to suboptimal sequences, as it does not consider the joint probability of the complete sequence.

While other search methods, such as beam search, top-k sampling, or top-p/nucleus sampling, can perform better than greedy decoding, they still face the same fundamental challenge. Ultimately, the effectiveness of these methods in approximating the global maximum depends on the data and the search space. As the sequence length $n$ increases, the search space grows exponentially, making it increasingly difficult to find the globally optimal sequence efficiently.

Recently, incorporating future information into language generation tasks has gained attention. Li et al. (2017) \cite{li2017learning} proposed an actor-critic model that integrates a value function to estimate future success, combining MLE-based learning with an RL-based value function during decoding. Oord et al. (2018) \cite{oord2018representation} aimed to preserve mutual information between context and future tokens by modeling a density ratio, rather than directly predicting future tokens. Serdyuk et al. (2018) \cite{serdyuk2017twin} addressed the challenge of long-term dependency learning in RNNs by running forward and backward RNNs in parallel to better capture future information. Lawrence et al. (2019) \cite{lawrence2019attending} trained an encoder by concatenating source and target sequences and using placeholder tokens in the target sequence, which are replaced during inference to generate the final output. These advancements illustrate the growing interest in and potential for optimizing future token predictions in natural language processing tasks.

Qi et al. (2020) \cite{qi2020prophetnet} introduce ProphetNet, a sequence-to-sequence pre-training model that employs a novel self-supervised objective called future n-gram prediction and an n-stream self-attention mechanism. Unlike traditional sequence-to-sequence models that optimize one-step-ahead prediction, ProphetNet predicts the next $n$ tokens simultaneously based on previous context tokens at each time step. This approach explicitly encourages the model to plan for future tokens and prevents overfitting on strong local correlations. The authors pre-train ProphetNet using base and large-scale datasets and demonstrate state-of-the-art results on abstractive summarization and question generation tasks.

Recent works have explored the use of multi-token prediction to improve the efficiency and performance of large language models. Gloeckle et al. propose a memory-efficient implementation and demonstrate the effectiveness of this approach on various tasks, showcasing strong performance on summarization, speeding up inference by a factor of 3×, and promoting the learning of longer-term patterns \cite{gloeckle2024better}. This method has been shown to improve sample efficiency, downstream capabilities, and inference speed, especially for larger model sizes and generative benchmarks like coding. Similarly, Stern et al. \cite{stern2018blockwise} and Cai et al. \cite{cai2024medusa} introduce methods that augment LLM inference by adding extra decoding heads to predict multiple subsequent tokens in parallel. Cai et al. refine the concept introduced by Stern et al. and propose MEDUSA, which uses a tree-based attention mechanism to construct and verify multiple candidate continuations simultaneously. While all three approaches leverage multi-token prediction, Gloeckle et al. focus on the effects of such a loss during pretraining, whereas Stern et al. and Cai et al. propose model finetunings for faster inference without studying the pretraining effects \cite{gloeckle2024better}.

\section{Theoretical Framework}
\label{theoretical_framework_section}

In this section, we establish the theoretical framework for our study. We focus on investigating the concept of average min-entropy and its relationship with min-entropy, particularly within the context of gbAR(p) models. The proofs and auxiliary results supporting our findings can be found in the Appendix.

\subsection{Entropies}
\label{entropies_subsection}

The different entropies that will be considered throughout this paper are gathered in the following:
\begin{definition}\label{entropies_definition}
    Let $\{X_t\}_{t\in\mathbb{Z}}$ be a stochastic process with discrete state-space and let $(X_{t-p},\ldots,X_{t-1},X_t,\ldots, X_{t+n})$ be a subset of $\{X_t\}_{t\in\mathbb{Z}}$ where $t,p,n\in\mathbb{Z}$ and $p,n$ are non-negative. Then:
    \begin{enumerate}
        \item[a)] The min-entropy of $(X_t,\ldots, X_{t+n})$ is:
        \begin{equation*}
           \resizebox{0.9\hsize}{!}{$
    H_{\infty}(X_t,\ldots, X_{t+n})=-\log \left[\max_{x_t,\ldots,x_{t+n}}P(x_t,\ldots,x_{t+n})\right].
        $}
\end{equation*}
         \item[b)] The min-entropy per bit of $(X_t,\ldots, X_{t+n})$ is:
                 \begin{equation*}
    h_{\infty}(X_t,\ldots, X_{t+n})= \frac{1}{n+1}H_{\infty}(X_t,\ldots, X_{t+n}).
    \end{equation*}
          \item[c)] The min-entropy per bit of $\{X_t\}_{t\in\mathbb{Z}}$ is:
          \begin{equation*}
                  \resizebox{0.9\hsize}{!}{
      $
           \begin{aligned}
                   h_{\infty}(\{X_t\}_{t \in \mathbb{Z}}) = - \lim_{k\to \infty}\frac{1}{k+1} \log \left[\max_{x_t,\mid t \mid  \leq \mid k/2  \mid} P(\{x_t\}_{\mid t \mid  \leq \mid k/2  \mid})\right] \ ,\\
                   k \in \mathbb{Z}.
           \end{aligned}   
               $}
\end{equation*}
\item[d)] The worst-case min-entropy of $(X_t,\ldots, X_{t+n})$ is:
\begin{equation*}
\begin{aligned}
        H_{\infty}(X_t,\ldots, X_{t+n}\mid X_{t-1},\ldots, X_{t-p})=
   \\ -\log \left[\max_{x_{t-p},\ldots,x_{t+n}}P(x_t,\ldots,x_{t+n}\mid x_{t-1},\ldots, x_{t-p})\right] .
\end{aligned}
\end{equation*}
\item[e)] The worst-case min-entropy per bit of $(X_t,\ldots, X_{t+n})$ is:
\begin{equation*}
\begin{aligned}
        h_{\infty}(X_t,\ldots, X_{t+n}\mid X_{t-1},\ldots, X_{t-p})
   =\\ \frac{1}{n+1} H_{\infty}(X_t,\ldots, X_{t+n}\mid X_{t-1},\ldots, X_{t-p}).
\end{aligned}
\end{equation*}
           \item[f)] The average min-entropy  of 
            $(X_t,\ldots, X_{t+n})$  is:
           \begin{equation*}
            \resizebox{0.9\hsize}{!}{
      $
        \begin{aligned}
        \tilde{H}_{\infty}(X_t,\ldots, X_{t+n}\mid X_{t-1},\ldots, X_{t-p}) =\\ - \log \left( \left\langle \max_{x_{t},\ldots,x_{t+n}} P( x_{t},\ldots,x_{t+n} \mid x_{t-1},\ldots,x_{t-p}) \right\rangle_{x_{t-1},\ldots,x_{t-p}} \right)     = 
        \\-\log \left[\sum_{x_{t-1},\ldots, x_{t-p}}P(x_{t-1},\ldots, x_{t-p})\max_{x_t,\ldots,x_{t+n}}P(x_{t},\ldots, x_{t+n}\mid x_{t-1},\ldots, x_{t-p})\right] .
        \end{aligned}
        $}
    \end{equation*}

            \item[g)] The average min-entropy per bit of $(X_t,\ldots, X_{t+n})$ is:
\begin{equation*}
\begin{aligned}
        \tilde{h}_{\infty}(X_t,\ldots, X_{t+n}\mid X_{t-1},\ldots, X_{t-p})=\\ \frac{1}{n+1}\tilde{H}_{\infty}(X_t,\ldots, X_{t+n}\mid X_{t-1},\ldots, X_{t-p}).
\end{aligned}
\end{equation*}    
    \end{enumerate}
\end{definition}
\begin{remark}
    When determining the min-entropy of an entire binary stochastic process $\{X_t\}_{t \in \mathbb{Z}}$, the direct evaluation
\begin{equation*}
    H_{\infty}(\{X_t\}_{t \in \mathbb{Z}})=-\log \left[\max_{x_t,t \in \mathbb{Z}} P(\{x_t\}_{t \in \mathbb{Z}})\right] 
    \end{equation*}
can lead to undefined behaviour. Indeed, if we write this as the limit
\begin{align*}
    H_{\infty}(\{X_t\}_{t \in \mathbb{Z}}) = \lim_{k \to \infty} H_{\infty}(\{X_t\}_{\mid t \mid  \leq \mid k/2 \mid}) \ , k \ \text{even}
\end{align*}
the maximum probability decay with k is bounded below by $\frac{1}{2^{k+1}}$ corresponding to the uniform noise, so in that case the limit
\begin{align*}
    H_{\infty}(\{X_t\}_{t \in \mathbb{Z}}) = - \lim_{k \to \infty} \log \left[\frac{1}{2^{k+1}}\right] = 1 + \lim_{k \to \infty} k
\end{align*}
diverges, growing as $\sim k$ with the number of elements $k$. Then the limit
\begin{equation*}
                \resizebox{0.9\hsize}{!}{
      $
\begin{aligned}
        h_{\infty}(\{X_t\}_{t \in \mathbb{Z}}) = - \lim_{k\to \infty}\frac{1}{k+1} \log \left[\max_{x_t,\mid t \mid  \leq \mid k/2  \mid} P(\{x_t\}_{\mid t \mid  \leq \mid k/2  \mid})\right] \ ,\\  k \in \mathbb{Z}
\end{aligned}
$}
\end{equation*}
is bounded by 1 for all $\{X_t\}_{t \in \mathbb{Z}}$. For this reason we are going to refer to this as the min-entropy per bit of the stochastic process $\{X_t\}_{t \in \mathbb{Z}}$.
\end{remark}

\subsection{Order-p Markov Chains}\label{Order-p Markov chains}

Let us begin by defining order-$p$ Markov Chains:

\begin{definition}[cf.  \cite{ching2006markov}, \cite{raftery1985model}]
\label{definition_markov}
    An order-$p$ Markov Chain is a  stochastic process $\{X_t\}_{t\in\mathbb{Z}}$ with discrete state-space $S$ such that:
    \begin{equation*}
        P(x_{t_0}\mid \{x_t\}_{t< t_0})=P(x_{t_0}\mid x_{t_0-1},\ldots, x_{t_0-p})
    \end{equation*}
    for every $t_0\in\mathbb{Z}$ and every $x_t\in S$ such that $t\leq t_0$.
\end{definition}

The aim of this subsection is to define special types of Markov chains and to prove some formulas that are applicable to them regarding the  entropies of Definition \ref{entropies_definition}. Although the experiments performed in this paper mostly involve Generalized Binary Autoregressive Models (see Definition \ref{definition_gbarp} below), other types of Markov chains that have connections with Generalized Binary Autoregressive Models are also explored (see Figure \ref{ModelHierarchy}) because they share certain properties with them and entropy formulas that are interesting on their own can be derived with relatively little additional effort for these processes. 

\begin{definition}
    Let $\{X_t\}_{t\in\mathbb{Z}}$ be an order-$p$ Markov chain with state-space $S$. Then:
    \begin{enumerate}
        \item[i)] $\{X_t\}_{t\in\mathbb{Z}}$ is said to be binary if $S=\{0,1\}$.
        \item[ii)] $\{X_t\}_{t\in\mathbb{Z}}$ is said to be stationary if      
        \begin{equation*}
                        \resizebox{0.9\hsize}{!}{
      $
             P(X_{t_1}=x_{1},\ldots, X_{t_n}= x_{n})=
             P(X_{t_1+\tau}=x_{1},\ldots, X_{t_n+\tau}= x_{n})
    $}
    \end{equation*}
    for every $\tau, t_1,\ldots, t_n\in\mathbb{Z}$, every $x_{1},\ldots,x_{n}\in S$ and every positive integer $n$.
    \item[iii)] $\{X_t\}_{t\in\mathbb{Z}}$ is said to have lag-p point-to-point correlations if $$P(x_t\mid x_{t-1},\ldots,x_{t-p})=P(x_t\mid x_{t-p}) \text{ for every $t\in\mathbb{Z}$.}$$
     \item[iv)] $\{X_t\}_{t\in\mathbb{Z}}$ is said to be irreducible if it is stationary, $S$ is finite and  for every $x, x_1,\ldots,x_{p}\in S$ there exists a non-negative integer $k$ such that $$P(X_{t+k}=x\mid X_{t-1}=x_1,\ldots,X_{t-p}=x_p)>0.$$
        \item[v)] $\{X_t\}_{t\in\mathbb{Z}}$ is said to be aperiodic if it is stationary, $S$ is finite and for every $x\in S$, $$\gcd\{n\geq 1:P(X_{t+n}=x\mid X_t=x)>0\}=1.$$
    \end{enumerate}
\end{definition}

\begin{remark}\label{stationaryminentropy}
    Note that if $\{X_t\}_{t\in\mathbb{Z}}$ is a stationary order-$p$ Markov chain then:
    \begin{equation*}
       h_{\infty}(\{X_t\}_{t \in \mathbb{Z}}) = \lim_{n\to \infty} \frac{1}{n+1} H(X_t,\ldots, X_{t+n}).
    \end{equation*}
\end{remark}

\subsubsection{Some Min-Entropy Inequalities for Order-p Markov Chains}\label{some_inequalities}

Here we establish several inequalities involving min-entropy, average min-entropy, and worst-case min-entropy.

We start by noting that for a fixed $n$, the following inequality between min-entropy, average min-entropy, and worst-case min-entropy holds.

\begin{lemma}
    \label{avg_conditional_min_entropy_bound_0}
 Let $\{X_t\}_{t\in\mathbb{Z}}$ be an order-$p$ Markov chain. Then:
 \begin{equation*}
                 \resizebox{0.9\hsize}{!}{
      $
        \begin{aligned}
                h_{\infty}(X_t,\ldots, X_{t+n}) \geq  \tilde{h}_\infty(X_t, \ldots, X_{t+n} \mid X_{t-1}, \ldots, X_{t-p}) \geq \\  h_\infty(X_t, \ldots, X_{t+n} \mid X_{t-1}, \ldots, X_{t-p})\ .
                   \end{aligned}   
        $}
    \end{equation*} 
\end{lemma}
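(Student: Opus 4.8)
The plan is to prove the two inequalities separately, both reducing to elementary facts about maxima, sums, and convexity of $-\log$.

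\emph{Right inequality: $\tilde{h}_\infty \geq h_\infty(\,\cdot \mid \cdot\,)$.} After clearing the common factor $\tfrac{1}{n+1}$, this amounts to showing
\begin{equation*}
  \sum_{x_{t-1},\ldots,x_{t-p}} P(x_{t-1},\ldots,x_{t-p})\,\max_{x_t,\ldots,x_{t+n}} P(x_t,\ldots,x_{t+n}\mid x_{t-1},\ldots,x_{t-p})
  \;\leq\; \max_{x_{t-p},\ldots,x_{t+n}} P(x_t,\ldots,x_{t+n}\mid x_{t-1},\ldots,x_{t-p}),
\end{equation*}
since $-\log$ is decreasing. I would bound each inner term $\max_{x_t,\ldots,x_{t+n}} P(\cdots\mid x_{t-1},\ldots,x_{t-p})$ by the global maximum over $x_{t-p},\ldots,x_{t+n}$ of the same conditional probability, which is a constant $M$ independent of the summation index; then the weighted sum is at most $M\sum P(x_{t-1},\ldots,x_{t-p}) = M$. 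Taking $-\log$ of both sides (and dividing by $n+1$) gives the claim. This step is routine.

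\emph{Left inequality: $h_\infty \geq \tilde{h}_\infty(\,\cdot\mid\cdot\,)$.} Again dropping $\tfrac{1}{n+1}$, I need
\begin{equation*}
  \max_{x_t,\ldots,x_{t+n}} P(x_t,\ldots,x_{t+n})
  \;\leq\;
  \sum_{x_{t-1},\ldots,x_{t-p}} P(x_{t-1},\ldots,x_{t-p})\,\max_{x_t,\ldots,x_{t+n}} P(x_t,\ldots,x_{t+n}\mid x_{t-1},\ldots,x_{t-p}).
\end{equation*}
The key observation is the law of total probability: for any fixed $(x_t,\ldots,x_{t+n})$,
\begin{equation*}
  P(x_t,\ldots,x_{t+n}) = \sum_{x_{t-1},\ldots,x_{t-p}} P(x_{t-1},\ldots,x_{t-p})\,P(x_t,\ldots,x_{t+n}\mid x_{t-1},\ldots,x_{t-p}).
\end{equation*}
Fix the maximizer $(x_t^\ast,\ldots,x_{t+n}^\ast)$ of the left-hand side. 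Writing $P(x_t^\ast,\ldots,x_{t+n}^\ast)$ via total probability and then bounding, inside the sum, each factor $P(x_t^\ast,\ldots,x_{t+n}^\ast\mid x_{t-1},\ldots,x_{t-p})$ by $\max_{x_t,\ldots,x_{t+n}} P(x_t,\ldots,x_{t+n}\mid x_{t-1},\ldots,x_{t-p})$, gives exactly the right-hand side. Applying $-\log$ (decreasing) and dividing by $n+1$ finishes it.

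I do not expect a serious obstacle here; the only point requiring mild care is making sure the conditioning event $(x_{t-1},\ldots,x_{t-p})$ in the sums has positive probability wherever it appears, so the conditional probabilities are well-defined — terms of zero weight can simply be discarded from the sums. One might also note that the right inequality does not actually use the Markov property, while the left one only uses total probability, so the Markov hypothesis in the statement is not essential; I would mention this in a remark rather than belabor it in the proof.
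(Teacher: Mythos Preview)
Your proposal is correct and follows essentially the same route as the paper: for the right inequality the paper also bounds each summand by the global maximum of the conditional probability and uses that the weights sum to one, and for the left inequality the paper writes $P(x_t,\ldots,x_{t+n})$ via total probability and then uses $\max_{x_t,\ldots,x_{t+n}}\sum \leq \sum \max_{x_t,\ldots,x_{t+n}}$, which is equivalent to your argument with the fixed maximizer. Your observation that neither inequality actually needs the Markov hypothesis is accurate and worth keeping as a side remark.
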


We conclude this part of Section \ref{some_inequalities} with a result regarding order-$p$ Markov chains that establishes a form of monotonicity for their average min-entropy.

\begin{lemma}
\label{increasing_avg_cond_min_entropy}
    Let $\{X_t\}_{t\in\mathbb{Z}}$ be an order-p Markov chain. Then 
    \begin{equation*}
        \begin{aligned}
             \tilde{H}_{\infty}(X_t,\ldots, X_{t+n}\mid X_{t-1},\ldots, X_{t-p})\leq\\  \tilde{H}_{\infty}(X_t,\ldots, X_{t+n+m}\mid X_{t-1},\ldots, X_{t-p}).
        \end{aligned}
    \end{equation*}
\end{lemma}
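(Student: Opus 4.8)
The plan is to peel off the outer $-\log$ and the outer expectation and reduce everything to a pointwise comparison of the inner maxima. Recall from Definition \ref{entropies_definition}(f) that $\tilde{H}_{\infty}(X_t,\ldots,X_{t+n}\mid X_{t-1},\ldots,X_{t-p}) = -\log\left(\left\langle \max_{x_t,\ldots,x_{t+n}} P(x_t,\ldots,x_{t+n}\mid x_{t-1},\ldots,x_{t-p})\right\rangle_{x_{t-1},\ldots,x_{t-p}}\right)$, and that $\tilde h_\infty$ is just the per-bit normalization. Since $t\mapsto -\log t$ is decreasing on $(0,\infty)$, the claimed inequality $\tilde{H}_{\infty}(X_t,\ldots,X_{t+n}\mid\cdots)\le\tilde{H}_{\infty}(X_t,\ldots,X_{t+n+m}\mid\cdots)$ is equivalent to the reversed inequality between the expectations:
\begin{equation*}
\left\langle \max_{x_t,\ldots,x_{t+n+m}} P(x_t,\ldots,x_{t+n+m}\mid x_{t-1},\ldots,x_{t-p})\right\rangle \;\le\; \left\langle \max_{x_t,\ldots,x_{t+n}} P(x_t,\ldots,x_{t+n}\mid x_{t-1},\ldots,x_{t-p})\right\rangle,
\end{equation*}
where both expectations are taken over $(x_{t-1},\ldots,x_{t-p})$ weighted by $P(x_{t-1},\ldots,x_{t-p})$.

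Next I fix a conditioning tuple $b=(x_{t-1},\ldots,x_{t-p})$ and establish the pointwise bound $\max_{x_t,\ldots,x_{t+n+m}} P(x_t,\ldots,x_{t+n+m}\mid b)\le \max_{x_t,\ldots,x_{t+n}} P(x_t,\ldots,x_{t+n}\mid b)$. This is pure marginalization: for any fixed $x_t,\ldots,x_{t+n}$ we have $P(x_t,\ldots,x_{t+n}\mid b)=\sum_{x_{t+n+1},\ldots,x_{t+n+m}} P(x_t,\ldots,x_{t+n+m}\mid b)\ge P(x_t,\ldots,x_{t+n+m}\mid b)$ for every extension of the tuple. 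In particular, taking the maximizer $(x_t^{\ast},\ldots,x_{t+n+m}^{\ast})$ of the left-hand side and discarding its last $m$ coordinates produces a tuple whose conditional probability is at least the left-hand maximum and at most the right-hand maximum. This gives the pointwise inequality for every $b$.

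Finally, since the expectation with respect to the law of $(X_{t-1},\ldots,X_{t-p})$ is a nonnegative weighted sum, it is monotone, so integrating the pointwise inequality over $b$ yields the displayed inequality between expectations; applying the decreasing map $-\log$ then gives the statement of the lemma, and dividing by $n+1$ versus $n+m+1$ is not needed here since the lemma is stated for the unnormalized $\tilde H_\infty$. I do not expect any genuine obstacle: the Markov hypothesis is not actually used for the monotonicity itself and merely ensures that the conditional probabilities in Definition \ref{entropies_definition} are well defined. The only place warranting a line of care is the interchange of the $\max$ over the larger index set with the marginalization identity — i.e.\ checking that projecting the larger-set maximizer down is legitimate — which is exactly the step isolated above.
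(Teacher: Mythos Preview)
Your proof is correct and follows essentially the same route as the paper: both reduce the claim to the pointwise inequality $\max_{x_t,\ldots,x_{t+n+m}} P(x_t,\ldots,x_{t+n+m}\mid b)\le \max_{x_t,\ldots,x_{t+n}} P(x_t,\ldots,x_{t+n}\mid b)$ for each conditioning tuple $b$, then average and apply $-\log$. The only cosmetic difference is that the paper bounds the longer conditional probability by the shorter one via the chain-rule factorization $P(x_t,\ldots,x_{t+n+m}\mid b)=P(x_{t+n+1},\ldots,x_{t+n+m}\mid x_{t+n},\ldots,x_{t-p})\,P(x_t,\ldots,x_{t+n}\mid b)\le P(x_t,\ldots,x_{t+n}\mid b)$, whereas you obtain the same bound by marginalization; your observation that the Markov assumption is not actually needed for the monotonicity is also correct.
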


This lemma establishes that the average min-entropy of an order-$p$ Markov chain is non-decreasing as the length of the future sequence, $n$, increases. This property reflects the intuitive notion that the uncertainty about future states cannot decrease when considering longer future sequences. This result will be particularly useful later when we discuss an interesting property of Generalized Binary Autoregressive Models (see Remark \ref{avg_less_or_more_min}).

\subsubsection{Convergence Theorem for the Min-Entropy and Average Min-Entropy of \texorpdfstring{order-$p$}{op} Markov chains}

The purpose of the following results, which is materialized in Theorem \ref{limitavgminentropy} below, is to establish conditions under which an asymptotical equivalence between the average min-entropy and the min-entropy of an order-$p$ Markov Chain can be guaranteed. 

\begin{theorem}[Convergence Theorem {\cite{bozorgmanesh2016convergence}}]\label{ConvergenceTheorem}
     Let $\{X_t\}_{t\in\mathbb{Z}}$ be an irreducible and aperiodic stationary order-$p$ Markov chain with finite state-space $S$. Then for every $x,x_{t-1},\ldots,x_{t-p}\in S$:
     \begin{equation*}
                         \resizebox{\hsize}{!}{
      $
\lim_{n\to \infty}P(X_{t+n}=x\mid X_{t-1}=x_{t-1},\ldots, X_{t-p}=x_{t-p})=P(X_t=x).
               $}
     \end{equation*}
    
\end{theorem}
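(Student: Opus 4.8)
The plan is to deduce the statement from the classical convergence theorem for finite, irreducible, aperiodic order-$1$ Markov chains by passing to the \emph{companion chain}. First I would introduce, for each $t\in\mathbb{Z}$, the window variable
\[
Y_t=(X_t,X_{t-1},\ldots,X_{t-p+1})\in S^p ,
\]
and check, straight from Definition \ref{definition_markov}, that $\{Y_t\}_{t\in\mathbb{Z}}$ is an order-$1$ Markov chain on the finite set $S^p$: conditionally on $Y_{t-1}=(x_{t-1},\ldots,x_{t-p})$, the window $Y_t$ is obtained by dropping the oldest coordinate and appending a new coordinate with law $P(X_t=\cdot\mid x_{t-1},\ldots,x_{t-p})$. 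Stationarity of $\{X_t\}$ transfers verbatim to stationarity of $\{Y_t\}$; hence $\{Y_t\}$ admits a stationary distribution $\pi$ on $S^p$, and, crucially, the marginal of $\pi$ onto any single coordinate equals the one-dimensional marginal $P(X_t=\cdot)$ of the original process.

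Next I would show that the stated hypotheses on $\{X_t\}$ make the companion chain, restricted to $\mathcal{C}:=\operatorname{supp}(\pi)$, irreducible and aperiodic in the usual sense. The conditioning events $\{X_{t-1}=x_{t-1},\ldots,X_{t-p}=x_{t-p}\}$ of positive probability are exactly the windows in $\mathcal{C}$, so it suffices to work on $\mathcal{C}$. For irreducibility, given windows $y,y'\in\mathcal{C}$, I would use the order-$p$ irreducibility condition to move from $y$ to the single state forming the oldest coordinate of $y'$, and then continue along the $p-1$ further one-step transitions that necessarily have positive probability because the window $y'$ itself occurs with positive probability under $\pi$; concatenating yields a positive-probability path from $y$ to $y'$. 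Aperiodicity of $\{Y_t\}$ then follows from the order-$p$ aperiodicity condition together with irreducibility of the companion chain. This bookkeeping --- turning one-dimensional conditional-probability statements into reachability and return statements for whole $p$-tuples, while discarding transient windows outside $\mathcal{C}$ --- is the main technical obstacle; the rest is essentially automatic.

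With the companion chain recognized as finite, irreducible and aperiodic on $\mathcal{C}$, I would invoke the Perron--Frobenius convergence theorem for Markov chains: $P(Y_{s+m}=y'\mid Y_s=y)\to\pi(y')$ as $m\to\infty$, uniformly over $y\in\mathcal{C}$. Finally I would project back: conditioning on $X_{t-1}=x_{t-1},\ldots,X_{t-p}=x_{t-p}$ is conditioning on $Y_{t-1}$ being that window, and $\{X_{t+n}=x\}$ is the event that the newest coordinate of $Y_{t+n}$ equals $x$, so summing the companion-chain convergence (with $m=n+1\to\infty$) over all windows $y'$ whose newest coordinate is $x$ gives
\[
\lim_{n\to\infty}P(X_{t+n}=x\mid X_{t-1}=x_{t-1},\ldots,X_{t-p}=x_{t-p})=\sum_{y':\,y'_1=x}\pi(y')=P(X_t=x),
\]
where the last equality is the coordinate-marginal identity from the first step. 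This closes the argument, with the only nontrivial work being the transfer of hypotheses in the second paragraph.
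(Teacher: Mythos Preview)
Your strategy is sound but differs from the paper's. The paper does not reprove convergence at all: it invokes \cite[Theorem~8]{bozorgmanesh2016convergence} directly for the statement $\lim_n P(X_{t+n}=x\mid x_{t-1},\ldots,x_{t-p})=\pi(x)$ (existence of $\pi$ coming from \cite[Theorem~7]{bozorgmanesh2016convergence} via irreducibility), and its only work is the identification $\pi(x)=P(X_t=x)$, obtained by expanding $P(X_{t+n}=x)$ as a weighted sum over conditioning windows, passing to the limit inside the finite sum, and using stationarity. Your route through the companion chain on $S^p$ is what one would do to make the cited result self-contained, and your identification of the limit as the first-coordinate marginal of the companion stationary law is arguably cleaner than the paper's computation.

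That said, the irreducibility transfer you sketch has a genuine gap. After reaching the oldest coordinate $z_p$ of the target window $y'=(z_1,\ldots,z_p)$, the current window is $(z_p,w_2,\ldots,w_p)$ with garbage coordinates $w_i$ inherited from the path you took; each of the ensuing $p-1$ one-step transitions conditions on a window that still contains some of these $w_i$ (for instance the final step requires $P(z_1\mid z_2,\ldots,z_p,w_2)>0$), and $\pi(y')>0$ by itself says nothing about those transition probabilities for arbitrary $w_i$. The paper's order-$p$ irreducibility only asserts that single \emph{states} are reachable from any window, not that prescribed \emph{windows} are, so closing this really is the nontrivial part --- your sketch does not yet do it, and you should expect to need a more careful argument (or to lean on the cited reference, as the paper does).
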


Building upon this theorem, we can now establish the asymptotic equivalence between the min-entropy and the average min-entropy for order-p Markov chains satisfying certain conditions.

\begin{theorem}\label{limitavgminentropy}
      If $\{X_t\}_{t\in\mathbb{Z}}$ satisfies the hypothesis of the Convergence Theorem, i.e. $\{X_t\}_{t\in\mathbb{Z}}$ is an irreducible and aperiodic stationary order-$p$ Markov
chain with finite state-space, then 
       \begin{equation*}
        \resizebox{\hsize}{!}{
      $
        \begin{aligned}
    h_{\infty}(\{X_t\}_{t\in\mathbb{Z}})=\lim_{n\to \infty} \tilde{h}_\infty(X_t, \ldots, X_{t+n} \mid X_{t-1}, \ldots, X_{t-p})=\\
    \lim_{n\to \infty} h_\infty(X_t, \ldots, X_{t+n} \mid X_{t-1}, \ldots, X_{t-p}).
        \end{aligned}      
        $}
    \end{equation*}   
\end{theorem}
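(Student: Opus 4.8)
The plan is to show a sandwich of limits. By Lemma~\ref{avg_conditional_min_entropy_bound_0} we already have, for every $n$,
\begin{equation*}
h_{\infty}(X_t,\ldots,X_{t+n}) \geq \tilde{h}_\infty(X_t,\ldots,X_{t+n}\mid X_{t-1},\ldots,X_{t-p}) \geq h_\infty(X_t,\ldots,X_{t+n}\mid X_{t-1},\ldots,X_{t-p}),
\end{equation*}
so it suffices to prove that the leftmost quantity and the rightmost quantity both converge, as $n\to\infty$, to the same limit, namely $h_{\infty}(\{X_t\}_{t\in\mathbb{Z}})$. For the leftmost term, note that since the chain is stationary, Remark~\ref{stationaryminentropy} gives $h_{\infty}(\{X_t\}_{t\in\mathbb{Z}}) = \lim_{n\to\infty}\frac{1}{n+1}H_{\infty}(X_t,\ldots,X_{t+n})$, which is exactly $\lim_{n\to\infty} h_{\infty}(X_t,\ldots,X_{t+n})$. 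So the real content is the rightmost term: I must show $\lim_{n\to\infty} h_\infty(X_t,\ldots,X_{t+n}\mid X_{t-1},\ldots,X_{t-p})$ exists and equals the same value.

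First I would unfold the worst-case conditional min-entropy per bit using the Markov property: for $x_{t-1},\ldots,x_{t-p}$ fixed, factor the conditional joint probability as a telescoping product $P(x_t,\ldots,x_{t+n}\mid x_{t-1},\ldots,x_{t-p}) = \prod_{k=t}^{t+n} P(x_k\mid x_{k-1},\ldots,x_{k-p})$, where the conditioning for early indices partly reaches into the fixed past block. The key analytic input is the Convergence Theorem: $P(X_{t+j}=x\mid X_{t-1}=x_{t-1},\ldots,X_{t-p}=x_{t-p}) \to P(X_t=x)$ as $j\to\infty$, uniformly over the finitely many choices of the past block (finiteness of $S$ makes ``pointwise'' and ``uniform'' coincide here). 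The idea is that for large $n$ the bulk of the product behaves like the product one would get in the stationary regime, so $-\frac{1}{n+1}\log$ of the maximized product is squeezed toward the same Cesàro-type limit that defines $h_\infty(\{X_t\})$. Concretely, I would split the product into an initial transient block of fixed length $L$ and a tail block of length $n+1-L$; on the tail the one-step conditional distributions are within $\varepsilon$ of the stationary ones (by the Convergence Theorem, for $L$ large enough), the transient block contributes $O(L/(n+1))\to 0$ to the per-bit quantity, and then let $\varepsilon\to 0$.

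The main obstacle is handling the $\max$ over the \emph{entire} tuple $x_t,\ldots,x_{t+n}$ (and, in the worst-case quantity, also over $x_{t-p},\ldots,x_{t-1}$) simultaneously with the asymptotics, rather than maximizing term-by-term — greedy maximization of the product need not give the true maximum, as the paper itself stresses in Section~\ref{multitoken_theory}. To control this I would bound the maximized product above and below: for the upper bound, replace each tail factor by its stationary counterpart plus $\varepsilon$ and use $\max$ of a product $\le$ product of $\max$'s only where it is safe, or more cleanly compare $-\frac{1}{n+1}\log\max(\cdot)$ directly against $h_\infty(X_{t+L},\ldots,X_{t+n})$ (an honest min-entropy-per-bit over $n+1-L$ variables, whose limit is $h_\infty(\{X_t\})$ by stationarity) up to an additive error that is $\varepsilon$ per tail bit plus a vanishing $L/(n+1)$ term; for the lower bound, Lemma~\ref{avg_conditional_min_entropy_bound_0} combined with Lemma~\ref{increasing_avg_cond_min_entropy} (monotonicity of $\tilde H_\infty$ in $n$) gives that $h_\infty(\cdot\mid\cdot)$ cannot drop below the common limiting value. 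Taking $\varepsilon\to0$ closes the sandwich and yields the claimed triple equality.
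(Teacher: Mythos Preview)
Your sandwich setup via Lemma~\ref{avg_conditional_min_entropy_bound_0} and the identification that the only nontrivial direction is $\liminf_{n} h_\infty(X_t,\ldots,X_{t+n}\mid X_{t-1},\ldots,X_{t-p}) \geq h_\infty(\{X_t\})$ are both correct. The gap is in how you try to obtain that lower bound. The Convergence Theorem concerns the \emph{marginal} $P(X_{t+j}=x\mid X_{t-1},\ldots,X_{t-p})$ converging to the stationary marginal as $j\to\infty$; it says nothing about the one-step transition factors $P(x_k\mid x_{k-1},\ldots,x_{k-p})$ in your telescoping product. For $k\geq t+p$ those factors already coincide exactly with the time-homogeneous transition probabilities and do not depend on the initial block at all, so there is no $\varepsilon$-approximation to perform. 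Splitting off a \emph{fixed}-length transient and bounding the tail product factor-by-factor therefore buys nothing: the tail product is $P(x_{t+L},\ldots,x_{t+n}\mid x_{t+L-1},\ldots,x_{t+L-p})$, whose maximum over all variables is, by stationarity, the very same worst-case conditional object you started with (now of length $n-L+1$). The argument is circular, and the subsequent ``let $\varepsilon\to 0$'' step has no content because $\varepsilon$ was never tied to anything the Convergence Theorem actually controls.

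The paper's argument differs in one decisive respect: it drops the first $\tau=\lfloor\sqrt{n}\rfloor$ coordinates, obtaining $P(x_{t+\tau},\ldots,x_{t+n}\mid x_{t-1},\ldots,x_{t-p})$ with the \emph{original} conditioning still in place, and then applies the Convergence Theorem to the distribution of the $p$-block at time $t+\tau$ given the initial block. Because $\tau\to\infty$, that block distribution converges to the stationary one, so $P(x_{t+\tau},\ldots,x_{t+n}\mid x_{t-1},\ldots,x_{t-p})$ becomes asymptotically independent of $x_{t-1},\ldots,x_{t-p}$ and matches the unconditioned $P(x_{t+\tau},\ldots,x_{t+n})$; because $\tau/n\to 0$, dividing by $n+1$ still recovers $h_\infty(\{X_t\})$. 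The point you are missing is that the gap must \emph{grow} with $n$ for the Convergence Theorem to erase the initial conditioning, while remaining $o(n)$ so the per-bit normalisation survives. A fixed $L$ cannot achieve the first requirement; your double limit ``fix $L$, send $n\to\infty$, then $\varepsilon\to 0$'' would only work if $\varepsilon$ were the mixing error at lag $L$, in which case letting $\varepsilon\to 0$ forces $L\to\infty$ and you are back to a growing gap --- but then you must rephrase the comparison at the level of block marginals, not one-step transitions.
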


This theorem shows that having conditional information about the process provides no advantage asymptotically under the stated conditions, as the min-entropy and the average min-entropy converge to the same value.

\subsubsection{State-Independent Maximum Transition Probability and Bitflip Symmetric \texorpdfstring{Order-$p$}{Op} Markov Chains}

This section introduces two related classes of Markov chains: State-Independent Maximum Transition Probability (SIMTP) and Bitflip Symmetric Order-$p$ Markov Chains. We investigate the properties of these chains, with a particular focus on their average min-entropy behavior. Bitflip symmetric chains are of interest as they could represent a physical symmetry of the random number generator, such as the symmetry between the two polarization states of a quantum random number generator (QRNG). Additionally, the SIMTP property enables us to perform exact min-entropy calculations for the process.

\begin{definition}[State-Independent Maximum Transition Probability Order-$p$  Markov Chain]
  A stationary order-$p$ Markov chain with state-space $S$  is said to be a State-Independent Maximum Transition Probability (SIMTP) Markov Chain if it satisfies the following property:
  \begin{equation*}
  \begin{aligned}
        \max_{x_{t} \in S} P(x_{t} \mid y_{t-1},  \ldots, y_{t-p}) = \max_{x_{t} \in S} P(x_{t} \mid z_{t-1}, \ldots, z_{t-p}) \\
        \text{for every } y_{t-1}, \ldots, y_{t-p}, z_{t-1}, \ldots, z_{t-p} \in S.
  \end{aligned}          
  \end{equation*}
\end{definition}

SIMTP models are those stationary Markov chains for which the maximum transition probability is independent of the initial state sequence of length $p$ in the chain.\\

\begin{proposition}
\label{entropies_SIMTP}
Let $\{X_t\}_{t\in\mathbb{Z}}$ be an order-$p$ SIMTP model with state-space $S$. Then, for every non-negative integer $n$ and every $x_{t-1},\ldots, x_{t-p}\in S$:
\begin{equation*}
\begin{aligned}
      h_{\infty}(\{X_t\}_{t \in \mathbb{Z}})=\tilde{h}_\infty(X_t, \ldots, X_{t+n} \mid X_{t-1}, \ldots, X_{t-p}) = \\ - \log \left[ \max_{x_t} P(x_t\mid x_{t-1}, \ldots , x_{t-p})\right].  
\end{aligned}
\end{equation*}
\end{proposition}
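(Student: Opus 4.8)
The plan is to reduce the proposition to two statements about the constant $M:=\max_{x_t}P(x_t\mid x_{t-1},\dots,x_{t-p})$, which the SIMTP hypothesis guarantees is independent of the conditioning block $x_{t-1},\dots,x_{t-p}\in S$ (so that $-\log M$ is unambiguous): namely (i) $\tilde h_\infty(X_t,\dots,X_{t+n}\mid X_{t-1},\dots,X_{t-p})=-\log M$ for every $n\ge 0$ and every conditioning block, and (ii) $h_\infty(\{X_t\}_{t\in\mathbb Z})=-\log M$. The core identity behind both is that $\max_{x_t,\dots,x_{t+n}}P(x_t,\dots,x_{t+n}\mid x_{t-1},\dots,x_{t-p})=M^{n+1}$ for every conditioning block.

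\emph{Core identity and part (i).} Using the order-$p$ Markov property I would factor the conditional joint probability as $\prod_{k=t}^{t+n}P(x_k\mid x_{k-1},\dots,x_{k-p})$ and prove by induction on $n$ that its maximum over $x_t,\dots,x_{t+n}$ equals $M^{n+1}$. The base case $n=0$ is literally the SIMTP definition. For the inductive step, peel off the factor $P(x_t\mid x_{t-1},\dots,x_{t-p})$: for each fixed $x_t$, stationarity lets me view $\max_{x_{t+1},\dots,x_{t+n}}\prod_{k=t+1}^{t+n}P(x_k\mid x_{k-1},\dots,x_{k-p})$ as an $n$-variable instance of the same maximization with initial block $(x_t,x_{t-1},\dots,x_{t-p+1})$, so by the induction hypothesis it equals $M^{n}$, independently of $x_t$; hence the full maximum is $\bigl(\max_{x_t}P(x_t\mid x_{t-1},\dots,x_{t-p})\bigr)\cdot M^{n}=M^{n+1}$. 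Substituting $M^{n+1}$ into the defining formula for $\tilde H_\infty$ in Definition~\ref{entropies_definition}(f) and using $\sum_{x_{t-1},\dots,x_{t-p}}P(x_{t-1},\dots,x_{t-p})=1$ gives $\tilde H_\infty=-\log(M^{n+1})=-(n+1)\log M$, so dividing by $n+1$ yields part (i).

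\emph{Part (ii).} Since a SIMTP chain is stationary, Remark~\ref{stationaryminentropy} reduces the claim to $\lim_{n\to\infty}\frac1{n+1}H_\infty(X_t,\dots,X_{t+n})=-\log M$, which I would obtain by sandwiching $m_n:=\max_{x_t,\dots,x_{t+n}}P(x_t,\dots,x_{t+n})$. Marginalizing over the past, $P(x_t,\dots,x_{t+n})=\sum_{x_{t-1},\dots,x_{t-p}}P(x_{t-1},\dots,x_{t-p})\,P(x_t,\dots,x_{t+n}\mid x_{t-1},\dots,x_{t-p})\le M^{n+1}$ by the core identity, so $m_n\le M^{n+1}$. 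For the lower bound, pick a maximizer $x^\ast_{t-1},\dots,x^\ast_{t-p}$ of $P(x_{t-1},\dots,x_{t-p})$ and set $c_p:=P(x^\ast_{t-1},\dots,x^\ast_{t-p})>0$ (positive because $S$ is finite); dropping every other term in the marginal sum gives $m_n\ge c_p\,\max_{x_t,\dots,x_{t+n}}P(x_t,\dots,x_{t+n}\mid x^\ast_{t-1},\dots,x^\ast_{t-p})=c_p M^{n+1}$. Applying $-\frac1{n+1}\log$ to $c_p M^{n+1}\le m_n\le M^{n+1}$ and letting $n\to\infty$ makes the $\tfrac{\log c_p}{n+1}$ term vanish, leaving $-\log M$.

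\emph{Main obstacle.} The only genuinely non-routine point is the lower bound in part (ii): one must notice that the unconditional maximum probability $m_n$ cannot be much smaller than the conditional one — discarding all but the single most likely past block in the marginalization already recovers $m_n$ up to the $n$-independent factor $c_p$, which after normalization by $n+1$ is harmless. Everything else is bookkeeping with the Markov factorization; the one place to stay careful is that the inductive step correctly invokes stationarity to recognize the shorter product as the same maximization problem shifted in time, and that the SIMTP property is used precisely where needed (to make each single-step maximum equal to $M$ regardless of history).
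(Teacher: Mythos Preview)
Your argument is correct. Part~(i) is essentially identical to the paper's: both establish the core identity $\max_{x_t,\dots,x_{t+n}}P(x_t,\dots,x_{t+n}\mid x_{t-1},\dots,x_{t-p})=M^{n+1}$ (you by induction, the paper by one-shot factorization in Lemma~\ref{avg_of_conditional_n_simtp_prob}) and then substitute into Definition~\ref{entropies_definition}(f).

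For Part~(ii) you take a different route. The paper proves an auxiliary chain-rule decomposition (Proposition~\ref{simtp_chain_rule}), writing $H_\infty(X_t,\dots,X_{t+n})$ exactly as $H_\infty(X_t,\dots,X_{t+p-1})$ plus a conditional term equal to $(n-p+1)(-\log M)$, so the bounded initial block disappears after dividing by $n+1$. You instead sandwich the unconditional maximum $m_n$ between $c_pM^{n+1}$ and $M^{n+1}$ by marginalizing over the past. Your approach is slightly more economical in that it avoids stating and proving the chain rule separately; the paper's approach gives an exact finite-$n$ identity rather than bounds. One minor remark: you justify $c_p>0$ ``because $S$ is finite,'' but finiteness of $S$ is not part of the SIMTP definition; it suffices to pick any past block of positive probability (one exists since the probabilities sum to $1$), so the argument goes through without that assumption.
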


Hence, the min-entropy of the SIMTP process can be computed straightforwardly from its transition probability.

\begin{definition}[Bitflip Symmetry in Binary Order-$p$  Markov Chains]
  A binary order-$p$  Markov chain  exhibits \textit{Bitflip Symmetry} if for all states $x_{t-p}, \ldots, x_{t-1}, x_{t}, \ldots, x_{t+n} \in \{0,1\}$ and for all non-negative integer \( n \) the following property holds:
\begin{equation*}
\begin{aligned}
          P(x_{t}, \ldots, x_{t+n} \mid x_{t-1}, \ldots, x_{t-p}) = \\ P(1 \oplus x_{t}, \ldots, 1 \oplus x_{t+n} \mid 1 \oplus x_{t-1}, \ldots, 1 \oplus x_{t-p})
\end{aligned}
\end{equation*}
  where $ \oplus$ represents the XOR operation. 
\end{definition}

Bitflip symmetric order-$p$ Markov chains are those binary order-$p$ Markov chains for which flipping the bits of all the variables in the conditional probability statement does not change the transition probability. These chains do not distinguish between 0 and 1 but still exhibit some correlation. Our interest in Bitflip symmetric order-p Markov chains is due to the following:

\begin{lemma}
\label{bitflip_symmetric_simtp}
Let $\{X_t\}_{t\in\mathbb{Z}}$ be an order-$p$ Bitflip-Symmetric Markov Chain with lag-p point-to-point correlations. Then   $\{X_t\}_{t\in\mathbb{Z}}$ is a SIMTP order-$p$ Markov chain. 
\end{lemma}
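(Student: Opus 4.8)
The plan is to show that the maximum one-step transition probability $\max_{x_t \in \{0,1\}} P(x_t \mid x_{t-1},\ldots,x_{t-p})$ takes the same value for every conditioning string, which is precisely the SIMTP property, by using the two hypotheses in turn. The first hypothesis reduces the dependence to a single bit, and the second identifies the two remaining cases.

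First I would invoke the lag-$p$ point-to-point correlation hypothesis: since $P(x_t \mid x_{t-1},\ldots,x_{t-p}) = P(x_t \mid x_{t-p})$ for every $t$, the quantity $\max_{x_t} P(x_t \mid x_{t-1},\ldots,x_{t-p})$ equals $\max_{x_t} P(x_t \mid x_{t-p})$ and therefore depends only on the single bit $x_{t-p}$. Because the state space is $\{0,1\}$, it then suffices to prove the single equality
\begin{equation*}
\max_{x_t \in \{0,1\}} P(x_t \mid x_{t-p}=0) = \max_{x_t \in \{0,1\}} P(x_t \mid x_{t-p}=1).
\end{equation*}

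Next I would bring in Bitflip Symmetry specialized to $n=0$, namely $P(x_t \mid x_{t-1},\ldots,x_{t-p}) = P(1\oplus x_t \mid 1\oplus x_{t-1},\ldots,1\oplus x_{t-p})$; combining this with the reduction above gives $P(x_t \mid x_{t-p}) = P(1\oplus x_t \mid 1\oplus x_{t-p})$. Setting $x_{t-p}=0$ and letting $x_t$ range over $\{0,1\}$ (so that $1\oplus x_t$ also ranges over $\{0,1\}$), the maximum over $x_t$ of the left-hand side equals the maximum over $\{0,1\}$ of $P(\,\cdot \mid x_{t-p}=1)$, which is exactly the claimed equality. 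Hence for arbitrary strings $y_{t-1},\ldots,y_{t-p}$ and $z_{t-1},\ldots,z_{t-p}$ the two maxima $\max_{x_t} P(x_t \mid y_{t-1},\ldots,y_{t-p})$ and $\max_{x_t} P(x_t \mid z_{t-1},\ldots,z_{t-p})$ coincide — they are both equal to the common value determined by $y_{t-p}$ (and by $z_{t-p}$, which is either equal to $y_{t-p}$, trivially, or its complement, by the Bitflip argument). Stationarity, which is already built into the SIMTP and Bitflip-Symmetry setting, guarantees that this value does not depend on $t$, completing the proof.

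The argument is short, and the only point needing a little care is the well-definedness of the conditional probabilities $P(x_t \mid x_{t-p})$, and of the conditional block probabilities in the Bitflip-Symmetry identity, when the conditioning event has probability zero; I would resolve this by adopting the convention already implicit in Definition \ref{entropies_definition} and in the definition of lag-$p$ point-to-point correlations — treating these transition probabilities as the prescribed data of the chain — so that the displayed chain of equalities holds literally rather than merely almost everywhere. Beyond that bookkeeping there is no real obstacle: the proof is a direct substitution.
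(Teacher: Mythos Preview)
Your proof is correct and follows essentially the same route as the paper's: reduce the conditioning to the single bit $x_{t-p}$ via the lag-$p$ point-to-point hypothesis, then use Bitflip Symmetry (with $n=0$) to match the two cases $x_{t-p}=0$ and $x_{t-p}=1$. You are a bit more explicit than the paper in spelling out how the bitflip identity on the full conditioning string combines with the lag-$p$ reduction to give $P(x_t\mid x_{t-p})=P(1\oplus x_t\mid 1\oplus x_{t-p})$, and in noting the stationarity and well-definedness bookkeeping, but the underlying argument is the same.
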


\subsubsection{Generalized Binary Autoregressive Models}
\label{Generalized Binary Autoregressive Models}

The gbAR(p) model \cite{jentsch2019generalized} is an autoregressive (AR) model for binary time series data. It allows the autoregressive parameters to take values in the range (-1, 1), enabling the model to capture negative autocorrelations and alternating patterns. Despite this flexibility, the gbAR(p) model maintains a parsimonious parameterization, making it a compact yet powerful model for binary data.
The gbAR(p) model is a parsimonious subclass of p-th order Markov chains for binary data. While sacrificing some flexibility compared to a full p-th order Markov chain, the gbAR(p) model offers a much more compact representation.

\begin{definition}[Generalized Binary Autoregressive Models \cite{jentsch2019generalized}]
\label{definition_gbarp}
Given $t\in\mathbb{Z}$ let $\left(a_t^{(1)},\ldots,a_t^{(p)},b_t\right)\sim M(1;|\alpha_1|,\ldots,|\alpha_p|,\beta)$ for some $\alpha_1,\ldots,\alpha_p\in (-1,1),\beta\in(0,1]$ such that: $$\sum_{i=1}^p|\alpha_i| + \beta=1 $$ and let $e_t\sim B(1,\epsilon_t)$ for some $\epsilon_t\in(0,1)$.
A stationary binary order-p Markov chain $\{X_t\}_{t\in\mathbb{Z}}$ that can be written in operator form as 
\begin{equation*}
 \resizebox{\hsize}{!}{
      $
    X_t = \sum_{i=1}^p \left[ a_t^{(i)} \mathds{1}_{\lbrace \alpha_i \geq 0 \rbrace} (0 \oplus \cdot ) +  a_t^{(i)} \mathds{1}_{\lbrace \alpha_i < 0 \rbrace} (1 \oplus \cdot )  \right] X_{t-i} + b_t e_t
    $}
\end{equation*}

where $\mathds{1}$ is the indicator function and $\oplus$ is the XOR gate is said to be a  Generalized Binary Autoregressive or gbAR(p) model.
\end{definition}

We will denote the array of coefficients $\alpha_1,\ldots,\alpha_p$ as $\boldsymbol{\alpha}$, and its L1 norm (i.e. the sum of the absolute values of its components) as 
$|\boldsymbol{\alpha}|$.

Our experiments are performed on data generated from gbAR(p) models. The rest of this section is devoted to define the type of gbAR(p) models we will be most interested in, to prove they satisfy the hypothesis of the Convergence Theorem \ref{ConvergenceTheorem}  and to obtain a formula for their min-entropy that will allow to evaluate  machine learning predictors entropy estimations (see Proposition \ref{gbarpu+convergence} and Proposition \ref{min_entropy_gbarp} below).

\begin{definition}
\label{positive_uniform_noise_gbarp}
    Let $\{X_t\}_{t\in \mathbb{Z}}$ be a gbAR(p) model. Then:
    \begin{enumerate}
        \item[i)]  $\{X_t\}_{t\in \mathbb{Z}}$ is said to be positive if $\alpha_i\geq 0$ for every $i\in\{1,\ldots,p\}$.
        \item[ii)] $\{X_t\}_{t\in \mathbb{Z}}$ is said to be a uniform noise gbAR(p) model if $e_t\sim B\left(1,\frac{1}{2}\right)$ for every $t\in\mathbb{Z}$.
    \end{enumerate}
\end{definition}
 Special attention will be paid to uniform noise and positive gbAR(p) models.

\begin{proposition}\label{min_entropy_gbarp}
     Let $\{X_t\}_{t\in\mathbb{Z}}$ be a uniform noise and positive gbAR(p) model. Then
       \begin{equation*}
        \resizebox{\hsize}{!}{
      $
        \begin{aligned}
    h_{\infty}(\{X_t\}_{t\in\mathbb{Z}})=\lim_{n\to \infty} \tilde{h}_\infty(X_t, \ldots, X_{t+n} \mid X_{t-1}, \ldots, X_{t-p})=\\
    \lim_{n\to \infty} h_\infty(X_t, \ldots, X_{t+n} \mid X_{t-1}, \ldots, X_{t-p})=-\log\left(1-\frac{\beta}{2}\right).
        \end{aligned}   
        $}
    \end{equation*}   
\end{proposition}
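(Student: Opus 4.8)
The plan is to reduce the statement to the already-available convergence result and then carry out one explicit computation that, conveniently, does not depend on $n$ at all. First I would invoke Proposition~\ref{gbarpu+convergence}, which ensures that a uniform noise, positive gbAR($p$) model is an irreducible and aperiodic stationary order-$p$ Markov chain with finite state-space, so that the hypotheses of the Convergence Theorem~\ref{ConvergenceTheorem} are met. Theorem~\ref{limitavgminentropy} then immediately yields the chain of equalities $h_{\infty}(\{X_t\}_{t\in\mathbb{Z}}) = \lim_{n\to\infty}\tilde{h}_\infty(X_t,\ldots,X_{t+n}\mid X_{t-1},\ldots,X_{t-p}) = \lim_{n\to\infty} h_\infty(X_t,\ldots,X_{t+n}\mid X_{t-1},\ldots,X_{t-p})$, so it only remains to evaluate one of the three terms; the worst-case min-entropy per bit is the easiest.

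Next I would unwind the operator form for a positive model. Since $\alpha_i\geq 0$ for all $i$, the indicator $\mathds{1}_{\{\alpha_i<0\}}$ vanishes and the defining equation collapses to $X_t=\sum_{i=1}^p a_t^{(i)}X_{t-i}+b_te_t$ with $(a_t^{(1)},\ldots,a_t^{(p)},b_t)\sim M(1;\alpha_1,\ldots,\alpha_p,\beta)$ and, in the uniform-noise case, $e_t\sim B(1,\tfrac12)$. Exactly one of the Bernoulli indicators is $1$, so conditionally on the past, $X_t$ copies $X_{t-i}$ with probability $\alpha_i$ and equals a fair coin with probability $\beta$. Hence $P(x_t\mid x_{t-1},\ldots,x_{t-p})=\sum_{i:\,x_{t-i}=x_t}\alpha_i+\tfrac{\beta}{2}\leq\sum_{i=1}^p\alpha_i+\tfrac{\beta}{2}=1-\tfrac{\beta}{2}$, with equality exactly when $x_{t-i}=x_t$ for every $i\in\{1,\ldots,p\}$.

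Finally I would assemble the bound. In Definition~\ref{entropies_definition}(d) the maximum defining the worst-case min-entropy ranges over $x_{t-p},\ldots,x_{t+n}$, i.e. over the conditioning coordinates as well, and the conditional joint probability factorizes as $\prod_{k=t}^{t+n}P(x_k\mid x_{k-1},\ldots,x_{k-p})$. By the previous paragraph every factor is at most $1-\tfrac{\beta}{2}$, so the product is at most $\left(1-\tfrac{\beta}{2}\right)^{n+1}$, and this value is attained by taking the constant sequence $x_{t-p}=\cdots=x_{t+n}$. Therefore $H_\infty(X_t,\ldots,X_{t+n}\mid X_{t-1},\ldots,X_{t-p})=-(n+1)\log\!\left(1-\tfrac{\beta}{2}\right)$, whence $h_\infty(X_t,\ldots,X_{t+n}\mid X_{t-1},\ldots,X_{t-p})=-\log\!\left(1-\tfrac{\beta}{2}\right)$ for every $n$; the limit equals the same constant, and combining with the first paragraph gives the full statement.

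The only genuinely delicate step is the second paragraph: one has to translate the operator/multinomial description of the gbAR($p$) dynamics into the transition-probability formula correctly and notice that the maximization in the worst-case min-entropy includes the conditioning variables — this is precisely what makes the constant sequence optimal and the answer independent of $n$. As an alternative not relying on Theorem~\ref{limitavgminentropy}, one could squeeze $\tilde h_\infty$ between $h_\infty(X_t,\ldots,X_{t+n})$ and the worst-case version using Lemma~\ref{avg_conditional_min_entropy_bound_0}, evaluate the unconditional min-entropy per bit via stationarity (Remark~\ref{stationaryminentropy}) and the same factorization, and then pass to the limit; but the route through Theorem~\ref{limitavgminentropy} is shorter.
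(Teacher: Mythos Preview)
Your proposal is correct and follows essentially the same route as the paper: invoke Proposition~\ref{gbarpu+convergence} to verify the hypotheses of Theorem~\ref{limitavgminentropy}, obtain the three-way equality of limits, and then compute the worst-case min-entropy per bit by factorizing the conditional probability and observing that the constant sequence $x_{t-p}=\cdots=x_{t+n}$ attains the bound $(1-\beta/2)^{n+1}$. The only cosmetic difference is that you re-derive the transition-probability formula from the operator form, whereas the paper simply cites the analogue of Remark~\ref{transition_probs_gbAR}.
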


\begin{remark}
\label{avg_min_entropy_limit_match}
    Let $\{X_t\}_{t\in\mathbb{Z}}$ be a uniform noise gbAR(p) model with point-to-point lag-$p$ correlations. Apart from having point-to-point lag-$p$ correlations, $\{X_t\}_{t\in\mathbb{Z}}$
    is bitflip-symmetric by Lemma \ref{gbarp_lag_bitflip}. Hence, $\{X_t\}_{t\in\mathbb{Z}}$ is SIMTP by Lemma \ref{bitflip_symmetric_simtp} and therefore for every non-negative integer $n$ Proposition \ref{entropies_SIMTP} yields
    $$ h_{\infty}(\{X_t\}_{t \in \mathbb{Z}})=\tilde{h}_\infty(X_t, \ldots, X_{t+n} \mid X_{t-1}, \ldots, X_{t-p}).$$
      The argumentation above is illustrated in the first two plots of Figure \ref{avg_vs_mean_alpha_p_n_dependance}, where the equivalence of the average min-entropy per bit and the min-entropy of uniform noise gbAR(p) models with point-to-point lag-$p$ correlations is observed regardless of the values $n,p,\alpha_p$.
\end{remark}

\begin{figure}
\centering
    \includegraphics[width=\columnwidth]{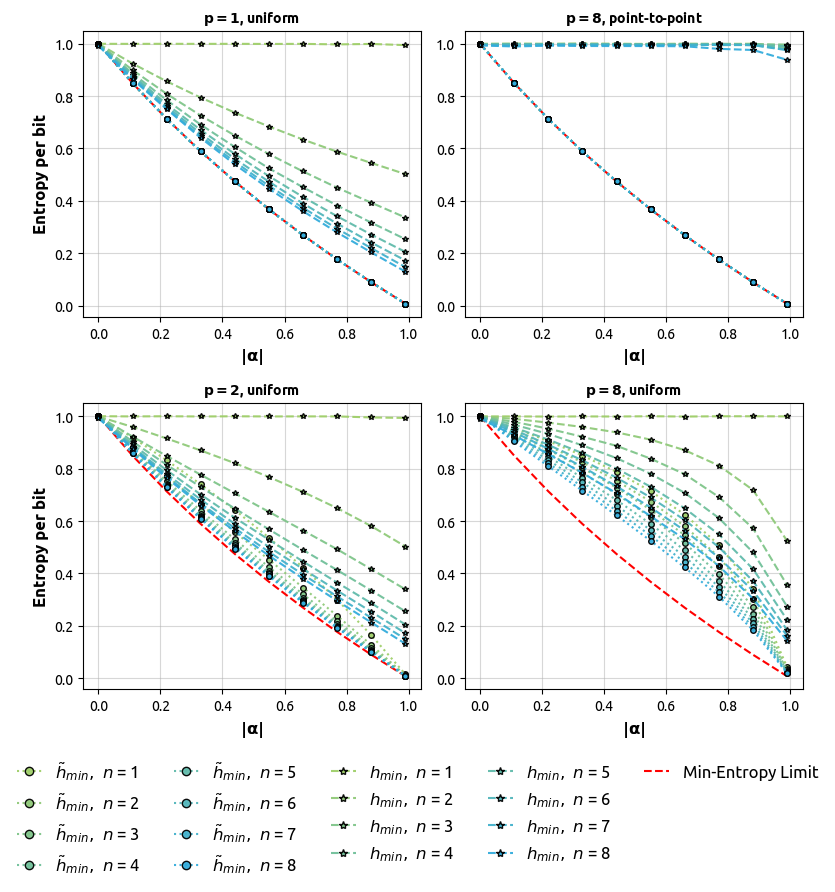}

\caption{$|\boldsymbol{\alpha}|$ dependance of average min-entropy compared with min-entropy and min-entropy limit for several sequence lengths $n$, correlation scales $p$ and autocorrelation functions (\textit{uniform} and \textit{point-to-point}). The data points have been evaluated numerically (see Subsection \ref{Theoretical min-entropies calculation}).} \label{avg_vs_mean_alpha_p_n_dependance}
\end{figure}

\begin{remark}
    Let $\{X_t\}_{t\in \mathbb{Z}}$ be a uniform noise and positive gbAR(p) model. Since $\{X_t\}_{t\in \mathbb{Z}}$ is stationary by Definition \ref{definition_gbarp} and it satisfies the hypothesis of the Convergence Theorem \ref{ConvergenceTheorem} by Proposition \ref{gbarpu+convergence}, it follows that 
            \begin{equation*}
        \lim_{n\to \infty} h(X_t,\ldots, X_{t+n})= h_{\infty}(\{X_t\}_{t \in \mathbb{Z}}) 
    \end{equation*}
    by Remark \ref{stationaryminentropy} and
    \begin{equation*}
              \lim_{n\to \infty}  \tilde{h}_\infty(X_t, \ldots, X_{t+n} \mid X_{t-1}, \ldots, X_{t-p})= 
        h_{\infty}(\{X_t\}_{t\in\mathbb{Z}})
    \end{equation*}
    by Theorem \ref{limitavgminentropy}. Moreover, 
        \begin{equation*}
  h_{\infty}(X_t,\ldots, X_{t+n})\geq \tilde{h}_\infty(X_t, \ldots, X_{t+n} \mid X_{t-1}, \ldots, X_{t-p})
    \end{equation*}
     by Lemma \ref{avg_conditional_min_entropy_bound_0}.
The three (in)equations above are illustrated in Figure \ref{avg_vs_min_n_dependance}, where we can observe that both the  average min-entropy per bit of $(X_t, \ldots, X_{t+n})$ and the  min-entropy per bit of $(X_t, \ldots, X_{t+n})$ tend to the min-entropy of $\{X_t\}_{t\in\mathbb{Z}}$ when $n$ goes to infinity, being the former partial entropy lower than the latter.
\end{remark}

\begin{figure}
\centering
    \includegraphics[width=\columnwidth]{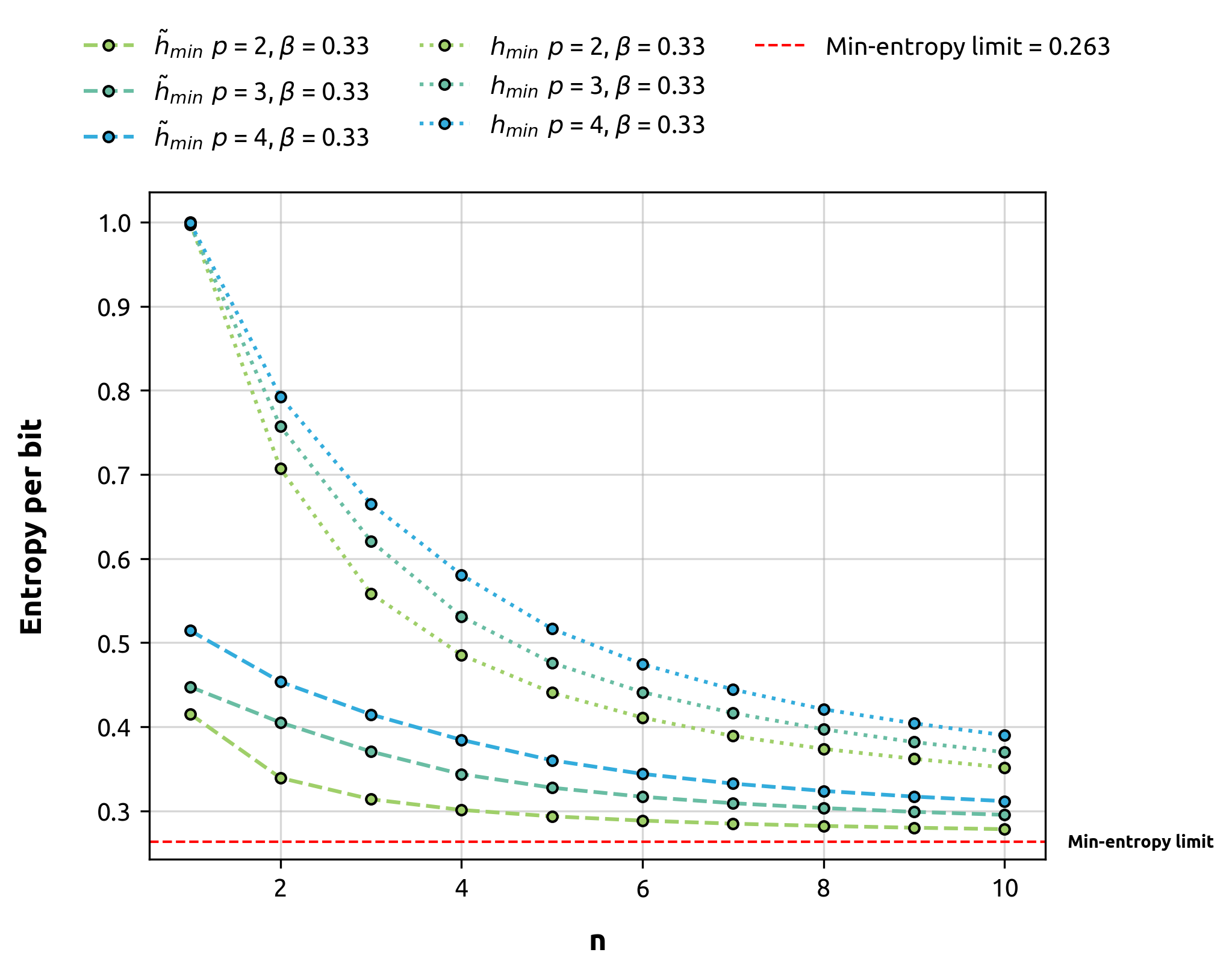}
\caption{Asymptotic behaviour of average min-entropy and min-entropy per bit in terms of the target space size $n$ for gbAR(p) models with several correlation lengths $p$ and fixed $\beta$. The $\boldsymbol{\alpha}$ arrays are uniform (i.e. all their components are equal). The data points have been evaluated numerically (see Subsection \ref{Theoretical min-entropies calculation}).} 
\label{avg_vs_min_n_dependance}
\end{figure}

\begin{remark}
\label{avg_less_or_more_min}
    Let $\{X_t\}_{t\in\mathbb{Z}}$ be an order-$p$ Markov chain. It is worth noting that although the average min-entropy  of $(X_t, \ldots, X_{t+n})$ cannot decrease with $n$ by Lemma \ref{increasing_avg_cond_min_entropy}, the average min-entropy per bit of $(X_t, \ldots, X_{t+n} \mid X_{t-1}, \ldots, X_{t-p})$ can actually do it (see Figure \ref{avg_vs_min_n_dependance}).
\end{remark}

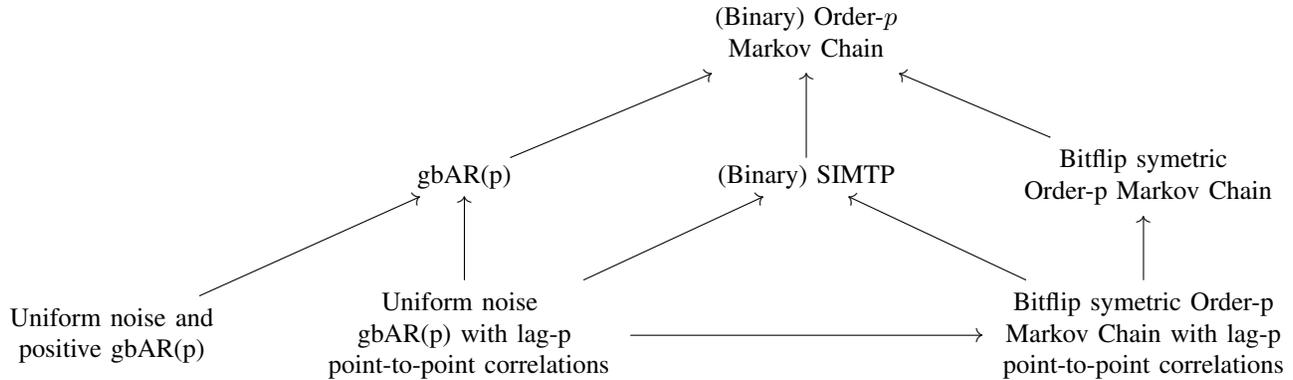
\begin{figure*}[ht]
    \centering
  $$
\xymatrix{ &
 & 
 {\begin{array}{c}
    \text{(Binary) Order-$p$}  \\
    \text{Markov Chain} 
 \end{array}} & \\
 & \text{gbAR(p)} \ar[ur] & \text{(Binary) SIMTP} \ar[u] & {\begin{array}{c}
    \text{Bitflip symetric}\\ \text{ Order-p Markov Chain} 
 \end{array}} \ar[ul]\\
 {\begin{array}{c}
    \text{Uniform noise and}  \\
    \text{positive gbAR(p)} 
 \end{array}} \ar[ur] &
{\begin{array}{c}
    \text{Uniform noise }  \\ \text{gbAR(p) with lag-p} \\
    \text{ point-to-point correlations}
 \end{array}}  \ar[rr] \ar[u] \ar[ur]  & & {\begin{array}{c}
    \text{Bitflip symetric Order-p}\\ \text{Markov Chain with lag-p}  \\
    \text{point-to-point correlations}
 \end{array}} \ar[u] \ar[ul]
}
$$    \caption{Hierarchy of the main models considered in this paper. An arrow from  model $M_1$ to  model $M_2$ means that every model of type $M_1$ is of type $M_2$.}
\label{ModelHierarchy}
\end{figure*}

\section{Experimental Methodology}

In this section, we outline the experimental methodology carried out, which is primarily based on code implementations. Our main goal is to validate our theoretical findings, for which we generate correlated data using gbAR(p) models (see Definition \ref{positive_uniform_noise_gbarp}).

Building upon Kelsey's predictor concept \cite{kelsey2015predictive}, we use machine learning as a tool for min-entropy estimation. Our methodology adopts the traditional machine learning approach, marked by separate training and evaluation phases. This strategy deviates from Kelsey's model of continuous updates, which we consider a non-essential aspect of predictor concepts for min-entropy evaluation. Thus, our methodology, termed as machine learning predictors, streamlines the process by clearly separating these stages, focusing on essential predictive capabilities without the need for constant updates.

Contrasting with the approach in \cite{truong2018machine}, which examines processes failing randomness due to large periods, we focus on processes with shorter-range, bit-level correlations since such correlations could be more similar to the realistic failure modes of physical and hardware-based RNGs, in line with the use of order-$k$  or order-2 Markov chains in \cite{kelsey2015predictive} and \cite{zhu2017analysis} respectively. Given this requirement for modeling realistic RNG failures with shorter-range dependencies, gbAR(p) models provide a parsimonious parameterization that allows us to control correlations and anticorrelations, making them a suitable choice for our analysis.

This data serves as the training set for two distinct types of neural networks, which are tasked with predicting the next \texttt{target\_bits} bits. As highlighted in the theory section (\ref{avg_min_entropy_limit_match} and Figure \ref{avg_vs_mean_alpha_p_n_dependance}), order-1 Markov Chains may present trivial cases where min-entropy and average min-entropy match. Therefore, it is important to analyze the behavior of our predictors in scenarios where this equivalence does not hold, forming the basis of our experimental approach.

All the experiments are conducted on a NVIDIA GeForce RTX 3090 with 24.576 GiB of memory and CUDA Version 12.3.

The experimental framework can be structured around four primary components: the data generation process using the gbAR(p) model, the Monte Carlo simulation for the evaluation of minimum entropies, the implementation of machine learning models training and evaluation, specifically GPT-2 and a variation of RCNN (a model taken from \cite{truong2018machine}), and the integration of all data processing steps. This pipeline encompasses the generation of gbAR(p) data, its evaluation using the NIST SP 800-90B test suite, the execution of machine learning predictions, and the compilation of relevant results.

For detailed documentation on code usage, parameter explanations, and further technical details, please refer to the \texttt{README.md} file in the associated code repository \cite{github_code_repo}.

\subsection{Data Generation}

The data generation is an implementation of the gbAR(p) model (Definition \ref{definition_gbarp}) in the \texttt{gbAR()} function. The call to this function is wrapped in the function \texttt{generate\_gbAR\_random\_bytes()}, which leverages different autocorrelation functions, namely \textit{point-to-point}, \textit{uniform} (all the components being equal), \textit{exponential}, and \textit{Gaussian}, to define the autocorrelation pattern through the $\boldsymbol{\alpha}$ parameter (as defined in Definiton \ref{definition_gbarp}).

It computes binary sequences by considering the autocorrelation defined by $\boldsymbol{\alpha}$ and the (here always uniform) random noise term (weighted by $\beta$), sourced from high-entropy random numbers generated by OpenSSL's \texttt{rand} command as the source of high entropy random numbers. Random numbers from OpenSSL are also employed in the \texttt{ossl\_rand\_mn\_rvs()} function, that generates samples from a multinomial distribution as required by the gbAR(p) model. These samples are then used to construct the final binary sequence according to the autocorrelation characteristics defined by the model parameters.

The \texttt{gbAR()} function includes a mechanism that discards an initial segment (here with size $10^4$ bytes) of the generated binary sequence. The rationale behind this is to allow the sequence to reach a state of statistical stationarity, thereby minimizing initial transient effects introduced in the generation.

\subsection{Min-Entropies Calculation} \label{Theoretical min-entropies calculation}
In our approach to numerically evaluate the average min-entropy and min-entropy of the gbAR(p) processes, we employ a Monte Carlo simulation. This involves creating a program that generates 100 samples, each comprising $10^5$ bytes. These samples are used to empirically estimate the joint frequencies. The computed frequencies form the basis for calculating both the min-entropy and the average min-entropy (using the known transition probabilities specific to the gbAR(p) processes).

\subsection{Machine Learning Predictors}

In this work, we use two distinct machine learning models to tackle the task of predicting binary sequences. The first model is an adaptation of the RCNN, while the second model is based on the GPT-2 architecture.

The selection of the RCNN and GPT-2 architectures is driven by our goal to explore prediction capabilities on binary sequences generated from autoregressive models with short-range correlations. The RCNN, as used by Truong et al., has proven effective in detecting correlations in quantum random number generators under deterministic classical noise influence. Its convolutional and recurrent layers are well suited to capture local patterns and short-term dependencies. In contrast, the transformer-based GPT-2 model, with its self-attention mechanism, offers a different approach. Although originally designed for natural language processing, we adapt it to our binary sequence prediction task to examine how it captures order-$p$ Markov chain characteristics. Using these two models enables us to validate the theoretical finding that machine learning predictors tend to estimate average min-entropy independently of architecture, provided they can learn from the data's correlations.

\subsubsection{Target Space Representation and Inference Strategies}

As discussed in Section \ref{multitoken_theory}, various approaches exist for multi-token prediction in language models. While recent works like Gloeckle et al. \cite{gloeckle2024better}, Stern et al. \cite{stern2018blockwise}, and Cai et al. \cite{cai2024medusa} have shown promising results in natural language processing tasks, our research focuses on a different domain. We aim to explore the relationship between model predictions and the min-entropy of the data, specifically for data with different correlations, rather than natural language.

To address the limitations of autoregressive inference strategies that rely on local decisions at each step, we propose directly predicting the entire sequence of n bits simultaneously. This approach allows us to obtain the global maximum probability for the complete sequence from the model, rather than relying on step-by-step decisions. By doing so, we aim to capture long-range dependencies and avoid the potential pitfalls of greedy, beam search or other methods in finding globally optimal sequences.

Our method involves using different tokenization strategies for input and target spaces:

\begin{itemize}
    \item Input space: We use binary tokenization where each token represents a single bit (0 or 1).
    \item Output space: We employ a tokenization where each token represents $n$ bits, resulting in $2^n$ unique classes.
\end{itemize}

This tokenization approach allows the model to predict groups of bits as single tokens, considering the joint probability of the entire sequence. We believe this method will lead to more accurate and globally optimal predictions, as it forces the model to consider the interdependencies between bits in the sequence.

\subsubsection{Model Training and Evaluation Methodology}

Our training dataset consists of sequences generated from the gbAR(p) model. From these sequences, we extract subsequences of length $\texttt{seqlen} - \texttt{target\_bits}$ bits Both models are adapted to classify over 
$2^n$ classes, corresponding to all possible sequences of  $n$ bits. In this context, each possible combination of $\texttt{target\_bits}$ is treated as a distinct class in the classification task.

We evaluate the model prediction accuracy $P_{ML}$ as 

\begin{equation}
P_{ML} = \frac{n_{correct}}{n_{total}} \ .
\end{equation}

Here, $n_{correct}$ represents the number of correct predictions made by the model, and $n_{total}$ denotes the total number of evaluations conducted. This measure of accuracy serves as a key indicator of the model's performance and its ability to accurately predict future bits based on the training received. The estimated min-entropy will be

\begin{equation}
\label{estimated_min_entropy}
     h_{ML} = - \log(P_{ML}) \ .
\end{equation}

We get a basic approximation of the error using the Wald approximation for the binomial proportion confidence interval, as outlined in \cite{meeker2017statistical}. The propagation of this error yields

\begin{equation}
\label{entropy_error}
    \Delta h_{ML} = \frac{z}{\texttt{target\_bits}} \ln(2)\sqrt{ \frac{\frac{1}{P_{ML}} - 1}{ n_{\text{evals}}  }} \ ,
\end{equation}

where $n_{\text{evals}}$ is the number of evaluation sequences.

In addition to accuracy, to assess the performance of the training procedure during the \textit{development} phase, we have included the evaluation of the binary entropy of the predictions, $P_e$, the proportion of zeros in the prediction, $P_c$, and the loss.

\subsubsection{RCNN Model}

We use model based on the RCNN model from the framework presented in \cite{truong2018machine}. The original implementation combines convolutional and LSTM layers followed by fully-connected layers. Initially, input integers are converted into one-hot vectors. These vectors are then processed through convolutional layers with max-pooling to extract features, which are subsequently handled by the LSTM layer to capture temporal dependencies.

Our adaptation transitions from byte-based input processing to bit sequence handling, accommodating classification over a fixed number of $2^n$ classes, where $n$ is the number of \texttt{target\_bits}. This aligns with the original design intended for classification over fixed $2^n$ classes. The architecture employs an output layer with $2^n$ neurons and softmax activation, allowing for multi-class classification. The categorical cross-entropy loss function is used for training. We use the RMSprop optimizer with a learning rate of 0.005.

Regarding the model architecture, we have slightly modified Truong's model to increase its size, including:

\begin{itemize}
    \item Convolution1D layers with 32, 64, and 128 filters, kernel sizes of 12, 6, and 3 respectively, all using 'relu' activation and 'same' padding.
    \item LSTM layers with 256 and 128 units, featuring return sequences and dropout layers with a rate of 0.2 for regularization.
    \item A final Dense layer with an output size equal to \texttt{target\_bits}, using sigmoid activation.
    
\end{itemize}

This model architecture results in approximately $~ 7.6\cdot10^5$ trainable parameters.

\subsubsection{GPT-2 Model}

The GPT-2 model, referenced in \cite{radford2019language}, is adapted from its typical use in natural language processing as provided by the Hugging Face Transformers library \cite{huggingface_transformers}. This adaptation restructures the model for traditional classification over the possible $2^n$ classes for the next $n$ \texttt{target\_bits}.

For processing the binary sequences, we implement a custom \texttt{BinaryDataset} class. Each data entry in this dataset consists of a binary bit sequence with a length defined by the \texttt{seqlen} parameter. This setup facilitates the mapping of each bit in the input sequence to the next \texttt{target\_bits}, aligning with the classification framework.

In terms of model architecture for the adapted GPT-2, the vocabulary size is set to $2^{target\_bits}$, aligning with the number of classes in our classification framework. The specific configuration of the model includes parameters such as \texttt{n\_positions=512}, \texttt{n\_ctx=512}, \texttt{n\_embd=768}, \texttt{n\_layer=3}, and \texttt{n\_head=3}. This configuration leads to the GPT-2 model having $\sim 21\cdot10^6$  trainable parameters.

For the training phase, we use the RMSprop optimizer with a learning rate of 0.0005. The \texttt{CrossEntropyLoss} loss function is chosen as the loss function.

We incorporate gradient scaling and accumulation in our training approach to enhance memory optimization and computational efficiency, especially important under constrained GPU availability.

\subsection{Pipeline}

We encapsulate in a pipeline the entire data processing for this work, from generating random numbers to saving results.

For each selection of input parameters, the method generates random bytes using the previously described gbAR(p) model. These bytes are saved to a file, which is later used in the data generators within the models to train and evaluate the models. We generate new gbAR(p) sequences for each run to ensure data variability and robustness.

The pipeline runs the NIST SP 800-90B entropy assessment for non i.i.d data in parallel with the model execution over a sample of the generated data (here $10^7$ bytes).

Post-analysis, we meticulously compile various results, including entropy assessments, model parameters, $P_{ML}$ values, execution time, and more, into a CSV file. 

\section{Results}

Our primary objective is to investigate the relationship between the estimated min-entropy and the number of \texttt{target\_bits}. To facilitate this analysis, we focus on low-entropy data for several reasons. Firstly, it ensures that models can effectively learn and capture underlying patterns. Secondly, it enhances the distinction between model predictions and inherent noise, allowing for more robust statistical analysis. In high-entropy scenarios (characterized by small $\alpha$ values), the entropy per bit approaches 1, resembling uniform noise (see Figure \ref{avg_vs_mean_alpha_p_n_dependance}). This proximity to maximum entropy poses challenges in assessing model performance due to overlapping confidence intervals. These intervals often encompass both the maximum entropy value of 1 and the expected theoretical value, which is also close to 1. Consequently, a large number of evaluations would be required to reduce measurement uncertainty and achieve statistical distinguishability. To address these challenges, we employ a generalized binary autoregressive model of order $p=10$ with a uniform $\alpha$ vector and a uniform noise term $\beta = 0.5$. This configuration provides a balance between learnable patterns and stochastic noise, enabling effective extraction of the \texttt{target\_bits} dependence while maintaining statistical significance in our results.

Both the GPT-2 and RCNN models' training data varied based on the number of target bits to be predicted. For tasks involving 1 to 12 target bits, 20 million bytes of raw data were used, equivalent to 125,000 training sequences of 128 bits each. This increased to 30 million bytes (187,499 sequences) for 13 to 15 target bits, and further to 42 million bytes (262,499 sequences) for 16 target bits. In each case, 80\% of the available data was allocated for training, with the remaining 20\% reserved for evaluation.

Our primary objective is to compare the min-entropy estimated by these models against the theoretical calculations and the estimations provided by the NIST SP 800-90B predictors and its overall entropy assessment. The results of these experiments are presented in Figure \ref{gpt2_rcnn_vs_theoretical_min_entropies}.

\begin{figure*}[!t]
\centering
\scalebox{0.70}{
    \includegraphics{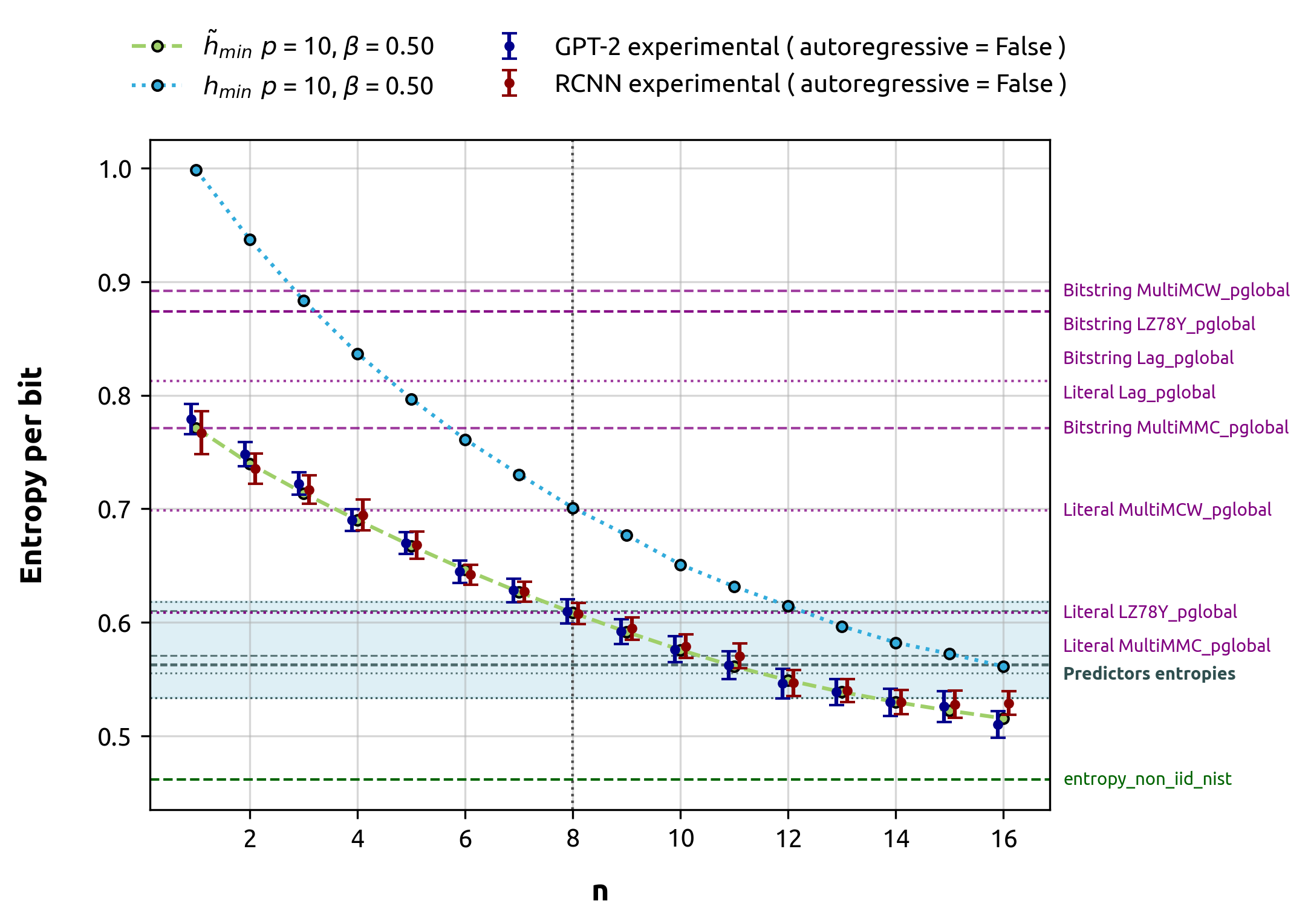}
}
\caption{Theoretical minimum entropies versus machine learning-based estimations of minimum entropy for gbAR(10) with a uniform $\alpha$ vector and a uniform noise term $\beta = 0.5$ (representing a low entropy scenario). For clarity in visualization, the markers representing different models are slightly offset along the x-axis. Error bars are derived using a binomial proportion confidence interval with a 95\% confidence level. The results from the NIST SP 800-90B global predictor tests are emphasized. The highlighted predictor entropies are the minimum of local and global estimates, in this case predominantly influenced by the local estimate. \textit{Bitstring} predictors try to predict the next bit, while \textit{Literal} predictors try to predict the next byte. The \texttt{entropy\_non\_iid\_nist} denotes the final outcome of the NIST SP 800-90B analysis, which is the minimum of all conducted tests in the suite. The \texttt{h\_min\_limit} is the theoretical limit of the min-entropy, interpreted as the min-entropy per bit of the entire process. For this specific gbAR(10) configuration with positive $\alpha$, both min-entropies converge to this limit. }
\label{gpt2_rcnn_vs_theoretical_min_entropies}
\end{figure*}

Finally, for illustration purposes, we demonstrate how greedy decoding fails to accurately estimate the min-entropy in data with certain types of correlations. This experiment utilized 20 million bytes of data, equivalent to 125,000 sequences of 128 bits each. Following our standard protocol, 80\% (100,000 sequences) were used for training, and 20\% (25,000 sequences) for evaluation. In this case, we focused on predicting 1 to 8 target bits. For comparison, we used two gbAR(2) models with alpha vectors $\frac{1}{4}[+1, +1]$ and $\frac{1}{4}[+1, -1]$. In the case with alternating correlation signs the global maximum probability cannot be split as the product of the local maximums, so the greedy decoding leads to suboptimal predictions compared to the inference over $2^n$ classes. In the first case ($|\frac{1}{4}[+1, +1]$), the global maximum can be reached as the product of local maximums at each bit 
(see Remark \ref{transition_probs_gbAR}), so both approaches match. This comparison illustrates how the greedy approach may lead to suboptimal predictions, as it does not consider the joint probability of the complete sequence in all cases. The results of this analysis are illustrated in Figure 5, highlighting how the greedy decoding strategy can fall short in accurately estimating min-entropy under certain correlation conditions, while performing adequately in others.

\begin{figure}
\centering
\includegraphics[width=\columnwidth]{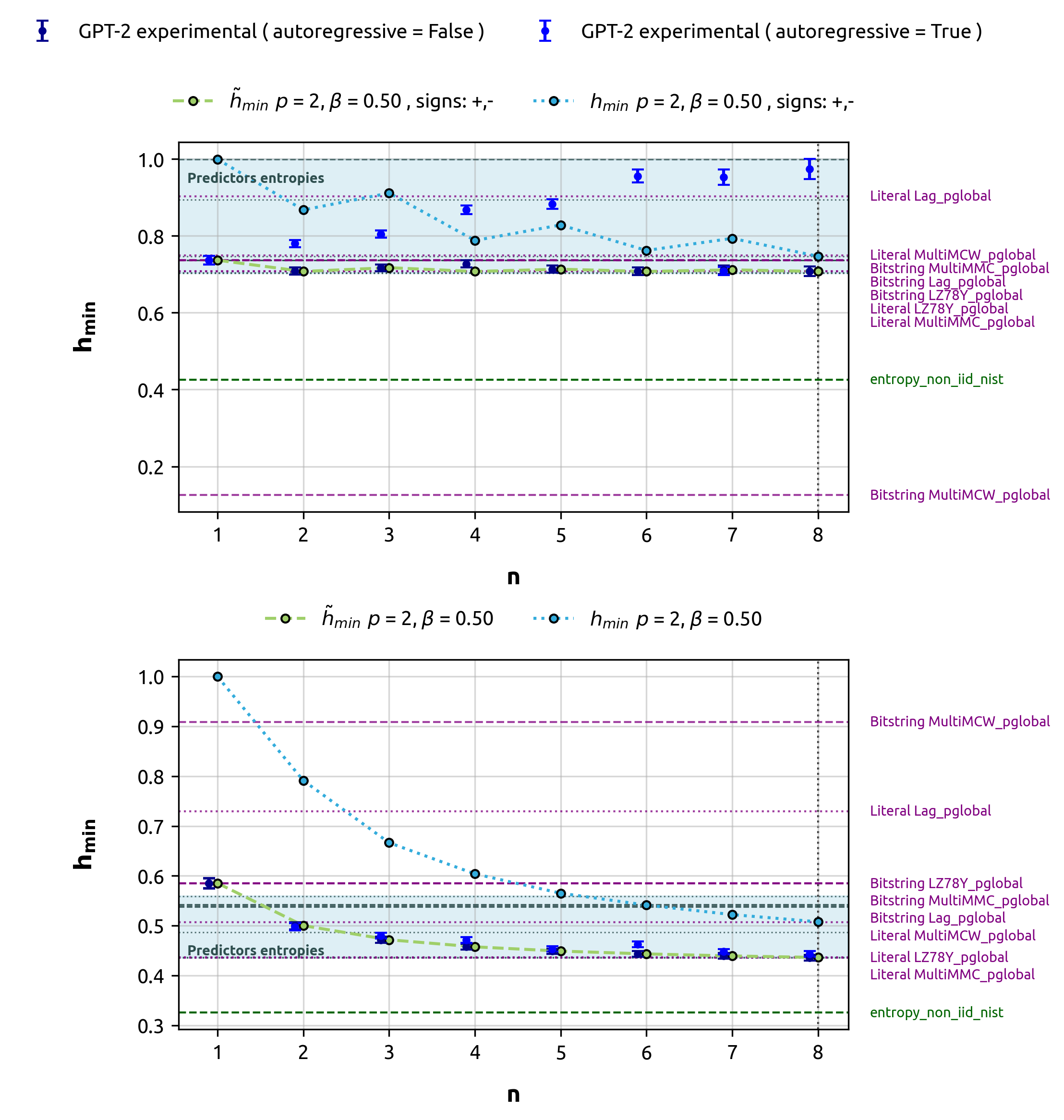}
\caption{Comparison of min-entropy estimates: Greedy decoding vs. direct prediction over $n$ $target\_bits$ for gbAR(2) models. The discrepancy between the experimental estimate of min-entropy and the theoretical is evident for $|\sqrt{\alpha}|[+1, -1]$, while results align for $|\sqrt{\alpha}|[+1, +1]$.} 
\label{autoregressive_inference}
\end{figure}

\section{Discussion}

Our work builds upon Kelsey’s definition of predictors \cite{kelsey2015predictive}, showing that these predictors effectively estimate the average min-entropy as long as they can harness correlations and effectively model conditional, rather than joint, probabilities. This distinction becomes significant when dealing with stochastic processes with complex correlation structures.

We show that while min-entropy varies with the number of bits considered, defining min-entropy per bit for the entire process is still possible. Lemma \ref{avg_conditional_min_entropy_bound_0} establishes that the average min-entropy per bit is always lower than or equal to the min-entropy for order-p Markov chains. Although generally distinct, in specific cases (Theorem \ref{limitavgminentropy}), both joint min-entropy and average min-entropy converge towards a process-wide min-entropy per bit. Interestingly, different states may exhibit varied decay laws despite this common limit (Figure \ref{avg_vs_min_n_dependance}), which is operationally significant when attackers have access to correlated data.

Figure \ref{avg_vs_mean_alpha_p_n_dependance} illustrates the interplay between correlation 'width' and 'length' in how average min-entropy approaches the min-entropy limit. This aligns with Remark \ref{avg_min_entropy_limit_match}, where average min-entropy per bit equals the min-entropy of uniform noise gbAR(p) models with point-to-point lag-$p$ correlations, regardless of \texttt{target\_bits}, $p$, and $|\boldsymbol{\alpha}|$ values.

As we approach high entropy limits, all entropy forms converge to the process limit, consistent across various alpha levels for the considered gbAR(p) models. This reaffirms that min-entropy consistently exceeds average min-entropy, as per Lemma \ref{avg_conditional_min_entropy_bound_0}.

Figure \ref{gpt2_rcnn_vs_theoretical_min_entropies} shows that our min-entropy estimations from both models align with the average min-entropy and stay within the error interval. Interestingly, the NIST Bitstring global predictors, designed to estimate the entropy of 1 target bit, generally overestimate the average min-entropy for 1 target bit, with the notable exception of the MultiMMC predictor. For 8 \texttt{target\_bits}, the NIST global predictors, specifically MultiMCW and Lag, tend to overestimate the average min-entropy. However, LZ78Y and MultiMMC show results that are close to the theoretical calculation. It is important to note that NIST predictors are not designed to estimate entropy in the large $n$ limit. In the particular case studied, where the min-entropy continues to decay beyond $n=8$, it is not surprising that the GPT-2 predictor provides a lower estimate in the $n=16$ run. This estimate is compatible with the theoretically expected value and, moreover, does not overlap with the gray area representing the minimum between the local and global estimates. Consequently, the GPT-2 predictor's estimate is closer to the min-entropy of the stochastic process, making it a better and more conservative estimation.

Local predictions consistently dominate the min-entropy estimate across all cases. As a result, the overall outcome of the predictor is determined by the local estimate, since the final entropy estimate is the lesser of local and global predictions. The local predictions fall within the theoretical min-entropy for the 9-14 \texttt{target\_bits} range and are significantly higher than the min-entropy limit of the process. The overall result of the NIST's entropy non-IID test, which is the minimum of all tests in the suite, including both predictors and non-predictors, is closer to the min-entropy limit. In this specific scenario, predictors do not significantly contribute to the overall test.

Our analysis of greedy decoding versus direct prediction (Figure 5) reveals limitations in autoregressive inference approaches. For certain correlation structures (e.g., gbAR(2) with $|\sqrt{\alpha}|[+1, -1]$), greedy decoding fails to capture global maximum probability. This underscores the importance of multi-token prediction for accurately estimating min-entropy in complex correlation structures. Single-token or greedy approaches may lead to suboptimal predictions and min-entropy overestimation, emphasizing the need for methods capturing joint probabilities over multiple tokens.

In conclusion, our machine learning predictors demonstrate more consistent performance compared to the NIST SP 800-90B in estimating average min-entropy for both 1 and 8 \texttt{target\_bits} in low entropy scenarios, while also providing robust estimates for larger values of $n$. This superior performance can be attributed, in part, to the non-parametric nature of ML min-entropy estimation. Unlike traditional methods that often assume specific underlying distributions, ML approaches allow for flexible modeling, making them particularly powerful in capturing complex, non-linear dependencies in the data. This flexibility is especially valuable when dealing with stochastic processes exhibiting intricate correlation structures, as it enables the model to adapt to the data's inherent patterns without being constrained by predetermined statistical assumptions.

High entropy scenarios present additional challenges, potentially requiring larger training runs and models. Augmenting target bits offers improved capture of long-range correlations but at a significant computational cost. As Equation \ref{entropy_error} indicates, maintaining constant error rates with increasing target bits requires exponential growth in evaluations, as $P_{ML} \sim 2^{-\texttt{target\_bits}}$ in high entropy limits.
This accuracy-computation trade-off necessitates careful balancing of long-range correlation capture and practical computational requirements. Future research should focus on developing efficient algorithms or approximation methods to handle larger target bits without prohibitive computational costs, addressing the challenges of multi-token prediction, autoregressive inference limitations, and target bit scaling in entropy estimation.

\section{Conclusions}

Our research has revealed several key insights into the estimation of min-entropy. We have shown that machine learning predictors are good at estimating average min-entropy, as long as they effectively harness correlations by estimating conditional probabilities. This becomes particularly significant in stochastic processes with complex correlation structures, where the difference between average min-entropy and min-entropy is relevant.

Our results highlight that both these entropies depend on the number of \texttt{target\_bits} considered. Given this important role of \texttt{target\_bits}, especially in scenarios with complex correlation structures, it may be operationally significant to include assessments of both average min-entropy and min-entropy for specific \texttt{target\_bits} values relevant to cryptographic (or other) scenarios. Importantly, in the examples studied, we observed that as average min-entropy decays with increasing \texttt{target\_bits}, targeting only a few bits could lead to an overestimate of the min-entropy. This finding underscores the potential risks of relying on limited-scope entropy estimates in cryptographic applications.
While defining min-entropy per bit for the entire process is feasible, and the entropies studied here converge towards this limit (suggesting a lower bound or worst-case scenario), this bound has not been explicitly demonstrated in this work. Therefore, developing effective methods to estimate this limit remains essential.

We have also found that machine learning predictors can beat NIST SP 800-90B predictors estimates in some cases, making them suitable tools to be included in entropy assessment suites.

Our research leaves several avenues open for exploration that may be of particular interest in further studies:

\begin{itemize}
    \item Development of methods for estimating min-entropy in large \texttt{target\_bits} scenarios.
    \item Exploration of the relationship between training data min-entropy, model size, and necessary training data size for accurate min-entropy estimation. This investigation goes beyond aligning model prediction accuracy with theoretical curves; it aims to provide a deeper understanding of model learning capacity at various entropy levels. Such knowledge could inform the appropriate scaling of computational resources and potentially offer improved estimates by considering theoretical bounds on min-entropy estimation.
\end{itemize}

In conclusion, while our work has advanced the understanding of min-entropy estimation through machine learning, it also highlights the practical complexity of this method and the need for more research to address its challenges.

\section{Acknowledgments}
\noindent This work is part of the R\&D project TED2021-130369B-C32, funded by MCIN/AEI/10.13039/501100011033 and by the “European Union NextGenerationEU/PRTR”, and is part of the project COMPROMISE PID2020-113795RB-C32/AEI/10.13039/501100011033. In addition, it was partially supported by project i-SHAPER PRTR-INCIBE - 2023/00623/001, which is being carried out within the framework of the Recovery, Transformation, and Resilience Plan funds, funded by the European Union (Next Generation). The authors want to thank Miguel Angel Hombrados Herrera and Gonzalo Martínez Ruiz de Arcaute for their help and fruitful comments.

\appendix 
The purpose of this Appendix is to provide the proofs for the results stated in Section \ref{theoretical_framework_section} and  to include some additional auxiliary results needed to prove them.
\renewcommand\thesubsection{\thesection.\arabic{subsection}}
\renewcommand\thesubsectiondis{\thesection.\arabic{subsection}}

\subsection{Order-p Markov Chains}
We start by revisiting the results presented in Section \ref{Order-p Markov chains}.
\begin{proof}[Proof of Lemma \ref{avg_conditional_min_entropy_bound_0}]

On the one hand, note that:
\begin{equation}\label{ineqsprevs}
 \resizebox{\hsize}{!}{
      $
\begin{aligned}
    \left\langle \max_{x_{t},\ldots,x_{t+n}} P( x_{t},\ldots,x_{t+n} \mid x_{t-1},\ldots,x_{t-p}) \right\rangle_{x_{t-1},\ldots,x_{t-p}} =\\ \sum_{x_{t-p}, \ldots, x_{t-1}} P(x_{t-p}, \ldots, x_{t-1}) \max_{x_t, \ldots , x_{t+n}}  P(x_t, \ldots , x_{t+n} \mid x_{t-1}, \ldots , x_{t-p}) \leq \\
    \sum_{x_{t-p}, \ldots, x_{t-1}} P(x_{t-p}, \ldots, x_{t-1}) \max_{x_{t-p}, \ldots , x_{t+n}}  P(x_t, \ldots , x_{t+n} \mid x_{t-1}, \ldots , x_{t-p})= \\\max_{x_{t-p}, \ldots , x_{t+n}}  P(x_t, \ldots , x_{t+n} \mid x_{t-1}, \ldots , x_{t-p}) .
\end{aligned}
$}
\end{equation}

The inequality 

 \begin{equation*}
        \resizebox{0.9\hsize}{!}{$
        \begin{aligned}
                \tilde{h}_\infty(X_t, \ldots, X_{t+n} \mid X_{t-1}, \ldots, X_{t-p}) \geq   h_\infty(X_t, \ldots, X_{t+n} \mid X_{t-1}, \ldots, X_{t-p})\ .
                   \end{aligned}      
         $}
    \end{equation*} 
follows from (\ref{ineqsprevs}) taking logarithms, dividing by $n+1$ and changing the sign.

On the other hand, the inequality
$$       
        h_{\infty}(X_t,\ldots, X_{t+n}) \geq  \tilde{h}_\infty(X_t, \ldots, X_{t+n} \mid X_{t-1}, \ldots, X_{t-p})      $$     

easily follows writing:

\begin{equation*}
 \resizebox{0.9\hsize}{!}{
      $
\begin{aligned}
    H_{\infty}(X_t,\ldots, X_{t+n}) =\\ -\log \left[\max_{x_t,\ldots,x_{t+n}} \sum_{ x_{t-1}, \ldots, x_{t-p}} P( x_{t-1}, \ldots, x_{t-p} ) P(x_t,\ldots,x_{t+n} |  x_{t-1}, \ldots, x_{t-p})\right] \ 
\end{aligned}
$}
\end{equation*}

and taking into account that:

\begin{equation*}
 \resizebox{0.9\hsize}{!}{
      $
\begin{aligned}
     \max_{x_t,\ldots,x_{t+n}} \sum_{ x_{t-1}, \ldots, x_{t-p}} P( x_{t-1}, \ldots, x_{t-p} ) P(x_t,\ldots,x_{t+n} |  x_{t-1}, \ldots, x_{t-p})   \leq \\
    \sum_{ x_{t-1}, \ldots, x_{t-p}} P( x_{t-1}, \ldots, x_{t-p} ) \max_{x_t,\ldots,x_{t+n}} P(x_t,\ldots,x_{t+n} |  x_{t-1}, \ldots, x_{t-p}) .
\end{aligned}    
$}
\end{equation*}
\end{proof}

\begin{proof}[Proof of Lemma \ref{increasing_avg_cond_min_entropy}]
   We have that:
    \begin{equation*} 
     \resizebox{\hsize}{!}{
      $
    \begin{aligned}
        P(x_t,\ldots,x_{t+n+m}\mid x_{t-1},\ldots, x_{t-p})  =\\ 
        P(x_{t+n+1},\ldots,x_{t+n+m}\mid x_{t+n},\ldots, x_{t-p})P(x_{t},\ldots,x_{t+n}\mid x_{t-1},\ldots, x_{t-p}) \leq \\ P(x_{t},\ldots,x_{t+n}\mid x_{t-1},\ldots, x_{t-p}).
    \end{aligned}
    $}
    \end{equation*}
    Since this inequality is independent of the realizations $$x_{t+n+1},\ldots, x_{t+n+m}$$ 
  it follows that:
    \begin{equation*}
    \begin{aligned}
          \max_{x_t,\ldots,x_{t+n+m}}P(x_t,\ldots,x_{t+n+m}\mid x_{t-1},\ldots, x_{t-p})\leq \\ \max_{x_t,\ldots,x_{t+n}}P(x_{t},\ldots, x_{t+n}\mid x_{t-1},\ldots, x_{t-p}).
    \end{aligned}      
    \end{equation*}
    Since logarithms are  monotonically increasing functions, we conclude that:
    \begin{equation*}
       \resizebox{\hsize}{!}{
      $
        \begin{aligned}
        \tilde{H}_{\infty}(X_t,\ldots, X_{t+n}\mid X_{t-1},\ldots, X_{t-p}) = \\ -\log \left[\sum_{x_{t-1},\ldots, x_{t-p}}P(x_{t-1},\ldots, x_{t-p})\max_{x_t,\ldots,x_{t+n}}P(x_{t},\ldots, x_{t+n}\mid x_{t-1},\ldots, x_{t-p})\right]  \leq\\
        -\log \left[\sum_{x_{t-1},\ldots, x_{t-p}}P(x_{t-1},\ldots, x_{t-p}) \max_{x_t,\ldots,x_{t+n+m}}P(x_t,\ldots,x_{t+n+m}\mid x_{t-1},\ldots, x_{t-p})\right] = \\ 
        \tilde{H}_{\infty}(X_t,\ldots, X_{t+n+m}\mid X_{t-1},\ldots, X_{t-p}).
        \end{aligned}
        $}
    \end{equation*}
\end{proof}

\begin{proof}[Proof of Theorem \ref{ConvergenceTheorem}]
   Since this result is a restatement of \cite[Theorem 8]{bozorgmanesh2016convergence}, we believe some explanation is necessary. First of all,
   what
   \cite[Theorem 8]{bozorgmanesh2016convergence} states is that for every $x,x_{t-1},\ldots,x_{t-p}\in S$:
    $$\lim_{n\to \infty}P(X_{t+n}=x\mid X_{t-1}=x_{t-1},\ldots, X_{t-p}=x_{t-p})=\pi(x)
    $$
   where $\pi$ is a stationary distribution whose existence is required as an hypothesis. 
   In our case, the existence of a stationary distribution $\pi$ follows from \cite[Theorem 7]{bozorgmanesh2016convergence} because we are assuming that $\{X_t\}_{t\in\mathbb{Z}}$ is irreducible.
   Having said that, taking the stationarity into account:
   \begin{equation*}
 \resizebox{\hsize}{!}{
      $
\begin{aligned}
P(X_t=x)=\lim_{n\to \infty} P(X_{t}=x)=\lim_{n\to \infty} P(X_{t+n}=x)=\\
\lim_{n\to \infty}\sum_{x_{t-1},\ldots, x_{t-n}}P(X_{t-1}=x_{t-1},\ldots, X_{t-p}=x_{t-p}) P(X_{t+n}=x\mid X_{t-1}=x_{t-1},\ldots, X_{t-p}=x_{t-p})=\\
\sum_{x_{t-1},\ldots, x_{t-n}}P(X_{t-1}=x_{t-1},\ldots, X_{t-p}=x_{t-p}) \lim_{n\to \infty} P(X_{t+n}=x\mid X_{t-1}=x_{t-1},\ldots, X_{t-p}=x_{t-p})=\\
\sum_{x_{t-1},\ldots, x_{t-n}}P(X_{t-1}=x_{t-1},\ldots, X_{t-p}=x_{t-p}) \pi(x)=\pi(x)
\end{aligned}
$}
\end{equation*}
and our restatement of  \cite[Theorem 8]{bozorgmanesh2016convergence} follows.  
\end{proof}

\begin{proof}[Proof of Theorem \ref{limitavgminentropy}]
On the one hand, denoting $\tau=\floor*{\sqrt{n}}$, we have that:
  \begin{equation}\label{tauineq}
        \resizebox{0.9\hsize}{!}{$
        \begin{aligned}
                   P(x_{t},\ldots, x_{t+n}\mid x_{t-1},\ldots,x_{t-p})=\frac{P(x_{t-p},\ldots, x_{t+n})}{P(x_{t-1},\ldots, x_{t-p})}\leq \\
        \frac{P(x_{t-p},\ldots,x_{t-1},x_{t+\tau},\ldots,x_{t+n})}{P(x_{t-1},\ldots, x_{t-p})}=
         P(x_{t+\tau},\ldots, x_{t+n}\mid x_{t-1},\ldots,x_{t-p}).
        \end{aligned}      
         $}
    \end{equation}  
    Therefore:
      \begin{equation*}
        \resizebox{0.9\hsize}{!}{$
        \begin{aligned}
                   \sum_{x_{t-1},\ldots,x_{t-p}}P(x_{t-1},\ldots,x_{t-p})\max_{x_{t},\ldots, x_{t+n}}  P(x_{t},\ldots, x_{t+n}\mid x_{t-1},\ldots,x_{t-p})\leq \\
                   \sum_{x_{t-1},\ldots,x_{t-p}}P(x_{t-1},\ldots,x_{t-p})\max_{x_{t+\tau},\ldots, x_{t+n}} P(x_{t+\tau},\ldots, x_{t+n}\mid x_{t-1},\ldots,x_{t-p}).
        \end{aligned}      
         $}
    \end{equation*}  
    In particular,
          \begin{equation}\label{keystep}
        \resizebox{0.9\hsize}{!}{$
        \begin{aligned}
        -\frac{1}{n+1}\log \left( \sum_{x_{t-1},\ldots,x_{t-p}}P(x_{t-1},\ldots,x_{t-p})\max_{x_{t},\ldots, x_{t+n}}  P(x_{t},\ldots, x_{t+n}\mid x_{t-1},\ldots,x_{t-p})\right)\geq \\ -\frac{1}{n+1}\log \left( \sum_{x_{t-1},\ldots,x_{t-p}}P(x_{t-1},\ldots,x_{t-p})\max_{x_{t+\tau},\ldots, x_{t+n}} P(x_{t+\tau},\ldots, x_{t+n}\mid x_{t-1},\ldots,x_{t-p})\right).
        \end{aligned}      
         $}
    \end{equation}  
    Now, the left hand side of the inequality (\ref{keystep}) is $$\tilde{h}_\infty(X_t, \ldots, X_{t+n} \mid X_{t-1}, \ldots, X_{t-p})$$
    and the limit of the right hand side of the inequality (\ref{keystep}) as $n$ goes to infinity is $h_{\infty}(\{X_t\}_{t\in\mathbb{Z}})$ because of the Convergence Theorem and the stationarity. Using Lemma \ref{avg_conditional_min_entropy_bound_0} we conclude that:
              \begin{equation*}
        \resizebox{0.9\hsize}{!}{$
        \begin{aligned}
h_{\infty}(\{X_t\}_{t\in\mathbb{Z}})\geq\lim_{n\to \infty} \tilde{h}_\infty(X_t, \ldots, X_{t+n} \mid X_{t-1}, \ldots, X_{t-p})\geq h_{\infty}(\{X_t\}_{t\in\mathbb{Z}})
        \end{aligned}      
         $}
    \end{equation*} 
    and the equality $$h_{\infty}(\{X_t\}_{t\in\mathbb{Z}})=\lim_{n\to \infty} \tilde{h}_\infty(X_t, \ldots, X_{t+n} \mid X_{t-1}, \ldots, X_{t-p})$$ 
    follows.    On the other hand:
              \begin{equation}\label{auxineq}
    \resizebox{\hsize}{!}{$
        \begin{aligned}
h_\infty(X_t, \ldots, X_{t+n} \mid X_{t-1}, \ldots, X_{t-p})=\\ -\frac{1}{n+1}\log\left[ \max_{x_{t-p},\ldots, x_{t+n}} P(x_t, \ldots, x_{t+n} \mid x_{t-1}, \ldots, x_{t-p})\right]\geq \\ -\frac{1}{n+1}\log \left[\max_{x_{t-p},\ldots, x_{t+n}} P(x_{t+\tau}, \ldots, x_{t+n} \mid x_{t-1}, \ldots, x_{t-p})\right]\underset{n\to \infty}{\longrightarrow}
 h_{\infty}(\{X_t\}_{t\in\mathbb{Z}}).
        \end{aligned}    
         $}
    \end{equation} 
    Note that  the inequality of (\ref{auxineq}) holds because of (\ref{tauineq}) and the limit of (\ref{auxineq}) holds because of the Convergence Theorem and the stationarity. Using Lemma \ref{avg_conditional_min_entropy_bound_0} we conclude that:
              \begin{equation*}
        \resizebox{\hsize}{!}{$
        \begin{aligned}
h_{\infty}(\{X_t\}_{t\in\mathbb{Z}})\geq\lim_{n\to \infty} h_\infty(X_t, \ldots, X_{t+n} \mid X_{t-1}, \ldots, X_{t-p})\geq h_{\infty}(\{X_t\}_{t\in\mathbb{Z}})
        \end{aligned}      
         $}
    \end{equation*} 
    and the result follows.
\end{proof}

\subsection{State-Independent Maximum Transition Probability and Bitflip Symmetric \texorpdfstring{Order-$p$}{Op} Markov Chains}

The following results are targeted at proving  that 
the average min-entropy of SIMTP models coincides with their min-entropy, as claimed in Proposition \ref{entropies_SIMTP}.

\begin{lemma}
\label{avg_of_conditional_n_simtp_prob}

Let $\{X_t\}_{t\in\mathbb{Z}}$ be a SIMTP order-$p$ Markov chain. Then:

\begin{equation*}
\begin{aligned}
        \left\langle \max_{x_t, \ldots, x_{t+n}} P(x_t, \ldots, x_{t+n} \mid x_{t-1}, \ldots, x_{t-p}) \right\rangle_{x_{t-1}, \ldots, x_{t-p}} = \\ \max_{
    x_{t-p}, \ldots, x_{t+n} }  P(x_t, \ldots, x_{t+n} \mid x_{t-1}, \ldots, x_{t-p}) \ .
\end{aligned}
\end{equation*}

\end{lemma}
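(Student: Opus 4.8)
The plan is to reduce the joint conditional probability to a product of one-step transition probabilities via the order-$p$ Markov property, and then exploit the SIMTP hypothesis to show that this product is maximized by a greedy choice whose value is independent of the conditioning sequence.

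First I would use the chain rule together with the order-$p$ Markov property to write
\[
P(x_t,\ldots,x_{t+n}\mid x_{t-1},\ldots,x_{t-p}) \;=\; \prod_{k=t}^{t+n} P(x_k \mid x_{k-1},\ldots,x_{k-p}),
\]
which is valid whenever the relevant conditioning events have positive probability (the degenerate cases can be absorbed into the usual conventions, or handled by a short auxiliary lemma). Let $M$ denote the common value of $\max_{x\in S} P(x\mid y_{t-1},\ldots,y_{t-p})$ provided by the SIMTP property; the crucial point is that $M$ does not depend on $y_{t-1},\ldots,y_{t-p}$.

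Next I would establish the two-sided estimate $\max_{x_t,\ldots,x_{t+n}} P(x_t,\ldots,x_{t+n}\mid x_{t-1},\ldots,x_{t-p}) = M^{n+1}$, independently of $x_{t-1},\ldots,x_{t-p}$. The upper bound is immediate: each factor $P(x_k\mid x_{k-1},\ldots,x_{k-p})$ is at most $M$, so the product is at most $M^{n+1}$. For the lower bound I would build a maximizing tuple greedily: having fixed $x_{t-1},\ldots,x_{t-p}$, choose $x_t\in\arg\max_{x_t}P(x_t\mid x_{t-1},\ldots,x_{t-p})$, then $x_{t+1}\in\arg\max_{x_{t+1}}P(x_{t+1}\mid x_t,\ldots,x_{t-p+1})$, and so on; by the SIMTP property every such one-step maximum equals $M$, so the resulting product is exactly $M^{n+1}$.

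Finally I would substitute this into both sides of the claimed identity. On the left, the average over $x_{t-1},\ldots,x_{t-p}$ of the constant $M^{n+1}$ is $M^{n+1}\sum_{x_{t-1},\ldots,x_{t-p}}P(x_{t-1},\ldots,x_{t-p})=M^{n+1}$. On the right, since $\max_{x_t,\ldots,x_{t+n}}P(x_t,\ldots,x_{t+n}\mid x_{t-1},\ldots,x_{t-p})=M^{n+1}$ for every choice of $x_{t-1},\ldots,x_{t-p}$, taking the extra maximum over $x_{t-1},\ldots,x_{t-p}$ leaves the value unchanged, so the right-hand side is also $M^{n+1}$, and equality follows. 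I expect the main obstacle to be phrasing the greedy induction cleanly — ensuring that at each step the SIMTP property is applied to the correct length-$p$ window of already-chosen symbols — together with the mild bookkeeping needed to guarantee the chain-rule factorization never conditions on a null event, which for a stationary finite-state chain is routine but should be acknowledged.
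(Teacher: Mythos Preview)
Your proposal is correct and follows essentially the same strategy as the paper: factorize via the order-$p$ Markov property, use SIMTP to show the maximum of the product equals the product of the one-step maxima (a constant independent of the conditioning window), and then observe that both the average and the outer maximum of a constant equal that constant. Your greedy two-sided bound is in fact a cleaner justification of the key identity \(\max_{x_t,\ldots,x_{t+n}}\prod_k P(x_k\mid\cdot)=\prod_k\max_{x_k}P(x_k\mid\cdot)\) than the paper's somewhat terse appeal to ``independence,'' so no changes are needed.
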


\begin{proof}
Let us write: 
\begin{equation}
\label{prod_decomposition}
\begin{aligned}
        P(x_t, \ldots, x_{t+n} \mid x_{t-1}, \ldots, x_{t-p}) = \\ \prod_{i=0}^n  P(x_{t+i}\mid x_{t+i-1}, \ldots, x_{t+i-p}).
\end{aligned}
\end{equation}
Since $P(x_{t+i}\mid x_{t+i-1}, \ldots, x_{t+i-p})$ is independent of $x_{t+i-1}, \ldots, x_{t+i-p}$ for every $i\in\{0,\ldots, n\}$, it follows from (\ref{prod_decomposition}) that:
\begin{equation}
\label{simtp_permeability}
\begin{aligned}
        \max_{x_t, \ldots, x_{t+n}} P(x_t, \ldots, x_{t+n} \mid x_{t-1}, \ldots, x_{t-p}) = \\ \prod_{i=0}^n \max_{x_{t+i}} P(x_{t+i}\mid x_{t+i-1}, \ldots, x_{t+i-p}). 
\end{aligned}
\end{equation}
Using (\ref{simtp_permeability}), the independence of $P(x_{t+i}\mid x_{t+i-1}, \ldots, x_{t+i-p})$ with respect to $x_{t+i-1}, \ldots, x_{t+i-p}$ for every $i\in\{0,\ldots, n\}$ and the fact that the sum of probabilities of all the outcomes within a sample space is $1$, we get:
 \begin{equation*}
   \resizebox{\hsize}{!}{
      $
        \begin{aligned}
            \left\langle \max_{x_{t},\ldots,x_{t+n}} P( x_{t},\ldots,x_{t+n} \mid x_{t-1},\ldots,x_{t-p}) \right\rangle_{x_{t-1},\ldots,x_{t-p}}    
        =\\ \sum_{x_{t-1},\ldots, x_{t-p}}P(x_{t-1},\ldots, x_{t-p})\max_{x_t,\ldots,x_{t+n}}P(x_{t},\ldots, x_{t+n}\mid x_{t-1},\ldots, x_{t-p})
         =\\ \sum_{x_{t-1},\ldots, x_{t-p}}P(x_{t-1},\ldots, x_{t-p})\prod_{i=0}^n \max_{x_{t+i}} P(x_{t+i}\mid x_{t+i-1}, \ldots, x_{t+i-p})  = \\\prod_{i=0}^n \max_{x_{t+i}} P(x_{t+i}\mid x_{t+i-1}, \ldots, x_{t+i-p})
        =\\\max_{x_t, \ldots, x_{t+n}} P(x_t, \ldots, x_{t+n} \mid x_{t-1}, \ldots, x_{t-p}).
        \end{aligned}
        $}
    \end{equation*}

\end{proof}

\begin{proposition}
\label{simtp_chain_rule}
Any order-$p$ SIMTP satisfies the following decomposition:
\begin{equation*}
 \resizebox{\hsize}{!}{
      $
\begin{aligned}
     H_{\infty}(X_t,\ldots, X_{t+n}) = \\ H_{\infty}(X_t,\ldots, X_{t+p-1}) + H_{\infty}(X_{t+p}, \ldots, X_{t + n } \mid X_{t + p - 1}, \ldots, X_{t}) 
\end{aligned}
$}
\end{equation*}

\end{proposition}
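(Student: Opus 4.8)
The plan is to reduce the statement to the ordinary chain rule for probabilities together with the SIMTP ``permeability'' identity that already appears inside the proof of Lemma \ref{avg_of_conditional_n_simtp_prob} (equation (\ref{simtp_permeability})). Throughout I assume $n\geq p$, so that the tail block $(X_{t+p},\ldots,X_{t+n})$ is nonempty; the degenerate case is immediate.

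First I would use the chain rule and the order-$p$ Markov property to factor the joint probability as
\begin{equation*}
P(x_t,\ldots,x_{t+n}) = P(x_t,\ldots,x_{t+p-1})\, P(x_{t+p},\ldots,x_{t+n}\mid x_{t+p-1},\ldots,x_{t}),
\end{equation*}
and then take the maximum over all of $(x_t,\ldots,x_{t+n})$, splitting the maximization into the ``head'' block $(x_t,\ldots,x_{t+p-1})$ and the ``tail'' block $(x_{t+p},\ldots,x_{t+n})$. Since all quantities are nonnegative, this gives
\begin{equation*}
\max_{x_t,\ldots,x_{t+n}} P(x_t,\ldots,x_{t+n}) = \max_{x_t,\ldots,x_{t+p-1}}\left[ P(x_t,\ldots,x_{t+p-1})\, \max_{x_{t+p},\ldots,x_{t+n}} P(x_{t+p},\ldots,x_{t+n}\mid x_{t+p-1},\ldots,x_{t})\right],
\end{equation*}
which is valid for any Markov chain; the SIMTP hypothesis has not been used yet.

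Next I would invoke the SIMTP property. Applying the computation that yielded (\ref{simtp_permeability}) to the tail block (i.e. with $t$ replaced by $t+p$ and $n$ by $n-p$), the inner maximum factors as $\prod_{i=p}^{n}\max_{x_{t+i}} P(x_{t+i}\mid x_{t+i-1},\ldots,x_{t+i-p})$, and by the SIMTP definition each factor is independent of the conditioning values. Hence the inner maximum is a constant $C$ not depending on $(x_t,\ldots,x_{t+p-1})$; since the maximization over the conditioning block in Definition \ref{entropies_definition}(d) is then vacuous, $-\log C = H_{\infty}(X_{t+p},\ldots,X_{t+n}\mid X_{t+p-1},\ldots,X_{t})$.

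Finally, pulling $C$ out of the outer maximum yields
\begin{equation*}
\max_{x_t,\ldots,x_{t+n}} P(x_t,\ldots,x_{t+n}) = C\cdot \max_{x_t,\ldots,x_{t+p-1}} P(x_t,\ldots,x_{t+p-1}),
\end{equation*}
and taking $-\log$ of both sides gives precisely the claimed decomposition, with the first summand equal to $H_{\infty}(X_t,\ldots,X_{t+p-1})$ and the second equal to $-\log C$. I do not anticipate a real obstacle here; the only delicate part is index bookkeeping, namely verifying that the shifted tail block matches the hypotheses of (\ref{simtp_permeability}) and that the conditioning block in the worst-case min-entropy term is exactly $(X_t,\ldots,X_{t+p-1})$, so that the SIMTP property makes the corresponding maximization trivial.
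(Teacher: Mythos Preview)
Your proposal is correct and follows essentially the same approach as the paper: factor $P(x_t,\ldots,x_{t+n})$ via the chain rule into head and tail blocks, then use the SIMTP property to argue that the maximum of the tail conditional is independent of the head so the joint maximum splits, and take $-\log$. The paper states the independence step more tersely (``we can maximize both probabilities independently''), whereas you justify it explicitly by appealing to the permeability identity (\ref{simtp_permeability}); otherwise the arguments coincide.
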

\begin{proof}

Let us write:

\begin{equation*}
   \resizebox{\hsize}{!}{
      $
\begin{aligned}
        H_{\infty}(X_t,\ldots, X_{t+n})   =-\log \left[\max_{x_t,\ldots,x_{t+n}}P(x_t,\ldots,x_n)\right] = \\ - \log \left[\max_{x_t,\ldots,x_{t+n}} P(x_t, \ldots, x_{t + p -1})  P(x_{t+p}, \ldots, x_{t + n } \mid x_{t + p - 1}, \ldots, x_{t}) \right].
\end{aligned}
$}
\end{equation*}

If the process is SIMTP then we can reach the maximum over $P(x_{t+p}, \ldots, x_{t + n })$ independently of the values required to maximize over $P(x_{t}, \ldots, x_{t + p -1 })$. In other words, we can maximize both  probabilities independently:

\begin{equation*}
\resizebox{\hsize}{!}{
      $
\begin{aligned}
     H_{\infty}(X_t,\ldots, X_{t+n})  = \\ - \log \left[\max_{x_t,\ldots,x_{t+p-1}}   P(x_{t}, \ldots, x_{t + p -1} )  \right] 
     - \log \left[\max_{x_t,\ldots,x_{t+n}} P(x_{t+p}, \ldots, x_{t + n } \mid x_{t + p - 1}, \ldots, x_{t}) \right]   = \\
     H_{\infty}(X_t,\ldots, X_{t+p-1}) + H_{\infty}(X_{t+p}, \ldots, X_{t + n } \mid X_{t + p - 1}, \ldots, X_{t}).
\end{aligned}
$}
    \end{equation*}
\end{proof}

\begin{proof}[Proof of Proposition \ref{entropies_SIMTP}]
On the one hand, by Proposition \ref{simtp_chain_rule} we have:

\begin{equation*}
 \resizebox{0.95\hsize}{!}{
      $
\begin{aligned}
        H_{\infty}(X_t,\ldots, X_{t+n})   = \\ H_{\infty}(X_t,\ldots, X_{t+p-1}) + H_{\infty}(X_{t+p}, \ldots, X_{t + n } \mid X_{t + p - 1}, \ldots, X_{t})   = \\
     H_{\infty}(X_t,\ldots, X_{t+p-1}) - \log \left[ \max_{x_t} P(x_t\mid x_{t-1}, \ldots , x_{t-p})^{n-p}\right].
\end{aligned}
$}
\end{equation*}

Then, for finite $p$, as the first term is bounded
\begin{equation}
\label{minentropySIMTP}
\begin{aligned}
       h_{\infty}(\{X_t\}_{t \in \mathbb{Z}})= \lim_{n\to \infty} \frac{1}{n+1} H_{\infty}(X_t,\ldots, X_{t+n}) = \\  - \log \left[ \max_{x_t} P(x_t\mid x_{t-1}, \ldots , x_{t-p})\right].
\end{aligned}
\end{equation}

On the other hand:

\begin{equation}
\label{avgcondminentropySIMTP}
\resizebox{\hsize}{!}{
      $
\begin{aligned}
    \tilde{h}_\infty(X_t, \ldots, X_{t+n} \mid X_{t-1}, \ldots, X_{t-p})  = \\ -\log \left[\left(\sum_{x_{t-1},\ldots, x_{t-p}}P(x_{t-1},\ldots, x_{t-p})\max_{x_t,\ldots,x_{t+n}}P(x_{t},\ldots, x_{t+n}\mid x_{t-1},\ldots, x_{t-p})\right)^{\frac{1}{n+1}}\right] = \\  
     -\log \left[\left(\sum_{x_{t-1},\ldots, x_{t-p}}P(x_{t-1},\ldots, x_{t-p})\max_{x_t}P(x_{t}\mid x_{t-1},\ldots, x_{t-p})^n\right)^{\frac{1}{n+1}}\right] = \\  
     -\log \left[\left(\sum_{x_{t-1},\ldots, x_{t-p}}P(x_{t-1},\ldots, x_{t-p})\right)^{\frac{1}{n+1}}\max_{x_t} P(x_{t}\mid x_{t-1},\ldots, x_{t-p})\right] = \\  
     - \log \left[ \max_{x_t} P(x_t\mid x_{t-1}, \ldots , x_{t-p})\right].
    \end{aligned}
    $}
\end{equation}

The result follows putting (\ref{minentropySIMTP}) and (\ref{avgcondminentropySIMTP}) together.
    
\end{proof}

We end this subsection of the Appendix by proving that Bitflip-Symmetric
Markov Chains with lag-p point-to-point correlations are SIMTP, as stated in Lemma \ref{bitflip_symmetric_simtp}.

\begin{proof}[Proof of Lemma \ref{bitflip_symmetric_simtp}]
Let $x_{t-p}\in\{0,1\}$. Then, by the definition of bitflip symmetry:
\begin{equation*}
 \resizebox{\hsize}{!}{
      $
\begin{aligned}
     \max_{x_t} P(x_t \mid x_{t-p}) = \max \{ P(X_t=0 \mid x_{t-p}), P(X_t=1 \mid  x_{t-p})\} =\\ \max \{ P(X_t=1 \mid 1\oplus x_{t-p}), P(X_t=0 \mid 1\oplus  x_{t-p})\}
     =\\ \max_{x_t} P(x_t \mid 1\oplus x_{t-p})
\end{aligned}   
$}
\end{equation*}
and the result follows.
\end{proof}

\subsection{Generalized Binary Autoregressive Models}

The last subsection of the Appendix is devoted to gather some interesting properties of gbAR(p) models. In particular, they will allow to prove the formula for calculating the min-entropy of uniform noise and positive gbAR(p) models stated in Proposition \ref{min_entropy_gbarp}.

 \begin{remark} \label{uniform_noise_probs}
     Let $\{X_t\}_{t\in \mathbb{Z}}$ be a uniform noise gbAR(p) model. Then
     $P(X_t=0)=P(X_t=1)=\frac{1}{2}$ for every $t\in\mathbb{Z}$ by \cite[Lemma 1]{jentsch2019generalized}. 
 \end{remark}

\begin{remark}
\label{transition_probs_gbAR}
    By \cite[Lemma 1]{jentsch2019generalized}  the  transition probabilities of a gbAR(p) model $\{X_t\}_{t\in\mathbb{Z}}$ can be written as:
\begin{equation*}
 \label{gbar_p_transition_probability}
 \resizebox{\hsize}{!}{
      $
\begin{aligned}
     P(x_{t} \mid x_{t-1}, \ldots, x_{t-p}) = \\ \sum_{i=1}^{p} [\mathds{1}_{\lbrace \alpha_i \geq 0 \rbrace} |\alpha_i| \cdot (1 \oplus x_{t} \oplus x_{t-i})  + \mathds{1}_{\lbrace \alpha_i < 0 \rbrace} |\alpha_i| \cdot (x_{t} \oplus x_{t-i})] +\beta\cdot P(e_t=x_t) \ .
\end{aligned}
$}
\end{equation*}

If $\{X_t\}_{t\in\mathbb{Z}}$ is a uniform noise and positive gbAR(p) model this writes as

\begin{equation*}
\label{gbar_p_u+_transition_probability}
\begin{aligned}
     P(x_{t} \mid x_{t-1}, \ldots, x_{t-p}) =  \sum_{i=1}^{p} \alpha_i \cdot (1 \oplus x_{t} \oplus x_{t-i})   +\frac{\beta}{2}
\end{aligned}
\end{equation*}

and this conditional probability reaches the maximum value
\begin{equation*}
    \max_{x_{t-1}, \ldots, x_{t-p}}  P(x_{t} \mid x_{t-1}, \ldots, x_{t-p}) = \sum_{i=1}^{p} \alpha_i + \frac{\beta}{2} = 1 - \frac{\beta}{2} \ 
\end{equation*}
when $x_t=x_{t-1}=\cdots=x_{t-p}$. In particular, a realization $x_{t-p},\ldots, x_{t+n}$  of $X_{t-p},\ldots, X_{t+n}$ such that $$x_{t-p}=\cdots= x_{t+n}$$ maximizes the conditional probability $P(x_{k}\mid x_{k-1},\ldots, x_{k-p})$ for every $k\in\{t,\ldots, t+n\}$ and it follows that:
\begin{equation*}
\begin{aligned}
        \max_{x_{t-p},\ldots, x_{t+n}}P(x_{t},\ldots, x_{t+n}\mid x_{t-1},\ldots, x_{t-p})=\\ \max_{x_{t-p},\ldots, x_{t+n}}\prod_{k=t}^{t+n} P(x_{k}\mid x_{k-1},\ldots, x_{k-p})= \\ \prod_{k=t}^{t+n} \max_{x_{k},\ldots, x_{k-p}}P(x_{k}\mid x_{k-1},\ldots, x_{k-p})=
        \left(1 - \frac{\beta}{2}\right)^{n+1}.
\end{aligned}
\end{equation*}

\end{remark}

\begin{remark}
   Note that a gbAR(p) model has point-to-point lag-p correlations if  $a_t^{(i)}=\delta_{i,p}$ for every $i\in\{1,\ldots,p\}$ and every $t\in \mathbb{Z}$. This is achieved when $\alpha_i=0$ for every $i\in\{1,\ldots, p-1\}$.
\end{remark}

\begin{lemma}\label{gbarp_lag_bitflip}
    A uniform noise gbAR(p) model with point-to-point lag-p correlations is bitflip-symmetric.
\end{lemma}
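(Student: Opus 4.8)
The plan is to reduce the statement about the joint conditional probability to a statement about a single transition probability, and then verify the latter directly from the explicit formula in Remark~\ref{transition_probs_gbAR}.

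First I would record what the two hypotheses give. Point-to-point lag-$p$ correlations mean $\alpha_i=0$ for $i\in\{1,\ldots,p-1\}$, so in the transition-probability formula of Remark~\ref{transition_probs_gbAR} only the $i=p$ summand survives; uniform noise means $e_t\sim B(1,\tfrac12)$, so $P(e_t=0)=P(e_t=1)=\tfrac12$. Hence for every $x_t,x_{t-p}\in\{0,1\}$,
\begin{equation*}
\begin{aligned}
P(x_t\mid x_{t-1},\ldots,x_{t-p}) = P(x_t\mid x_{t-p}) = \\ \mathds{1}_{\{\alpha_p\geq 0\}}|\alpha_p|\,(1\oplus x_t\oplus x_{t-p})+\mathds{1}_{\{\alpha_p<0\}}|\alpha_p|\,(x_t\oplus x_{t-p})+\frac{\beta}{2}.
\end{aligned}
\end{equation*}

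Next, since $\{X_t\}_{t\in\mathbb{Z}}$ is an order-$p$ Markov chain, the joint conditional probability factors as $P(x_t,\ldots,x_{t+n}\mid x_{t-1},\ldots,x_{t-p})=\prod_{k=t}^{t+n}P(x_k\mid x_{k-1},\ldots,x_{k-p})$, and applying the bitflip $x_j\mapsto 1\oplus x_j$ to every variable $x_{t-p},\ldots,x_{t+n}$ turns the right-hand side into $\prod_{k=t}^{t+n}P(1\oplus x_k\mid 1\oplus x_{k-1},\ldots,1\oplus x_{k-p})$. Therefore it suffices to show that each single-step transition probability is invariant under the simultaneous flip of $x_t$ and $x_{t-p}$, i.e. $P(x_t\mid x_{t-p})=P(1\oplus x_t\mid 1\oplus x_{t-p})$. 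I would check this invariance term by term in the displayed formula: $1\oplus(1\oplus x_t)\oplus(1\oplus x_{t-p})=1\oplus x_t\oplus x_{t-p}$ and $(1\oplus x_t)\oplus(1\oplus x_{t-p})=x_t\oplus x_{t-p}$ since the two extra $1$'s cancel under XOR, while $P(e_t=1\oplus x_t)=\tfrac12=P(e_t=x_t)$ by uniform noise, and the constant $\beta/2$ is obviously unchanged.

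This yields the equality of single-step probabilities, hence of the joint conditional probabilities, which is precisely bitflip symmetry. There is no genuine obstacle here; the only point requiring a little care is the index bookkeeping in the product decomposition when the conditioning window $x_{k-1},\ldots,x_{k-p}$ straddles the ``given'' variables $x_{t-1},\ldots,x_{t-p}$ and the ``future'' variables $x_t,\ldots,x_{t+n}$, but the point-to-point structure makes even this harmless, since only the single lag $x_{k-p}$ enters each factor and the bitflip is applied uniformly to all variables.
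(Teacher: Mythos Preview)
Your proposal is correct and follows essentially the same route as the paper: factor the joint conditional probability into one-step transitions using the Markov property and the point-to-point structure, then verify bitflip invariance of each factor directly from the explicit transition formula in Remark~\ref{transition_probs_gbAR} via the XOR cancellation $(1\oplus x_t)\oplus(1\oplus x_{t-p})=x_t\oplus x_{t-p}$ and the uniform-noise symmetry $P(e_t=1\oplus x_t)=P(e_t=x_t)=\tfrac12$. The only cosmetic difference is that the paper writes the factorization directly as $\prod_{i=0}^{n}P(x_{t+i}\mid x_{t+i-p})$, whereas you first write the full order-$p$ Markov product and then reduce each factor; the content is the same.
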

\begin{proof}
     By the definition of conditional probability and the definition of gbAR(p) model with point-to-point lag-p correlations:
    \begin{equation}
    \label{factorization} P(x_t,\ldots, x_{t+n}\mid x_{t-1},\ldots,x_{t-p})=\prod_{i=0}^{n} P(x_{t+i}\mid x_{t+i-p}).
    \end{equation}
  Then, by Remark \ref{uniform_noise_probs} and Remark \ref{transition_probs_gbAR}:
    \begin{equation}
        \label{bitflipequality} 
        \begin{aligned}
            P(x_{t+i}\mid x_{t+i-p})  = \\ [\mathds{1}_{\lbrace \alpha_p \geq 0 \rbrace} |\alpha_p| \cdot (1 \oplus x_{t+i} \oplus x_{t+i-p})  + \\ \mathds{1}_{\lbrace \alpha_p < 0 \rbrace} |\alpha_p| \cdot (x_{t+i} \oplus x_{t+i-p})] + \frac{\beta}{2}= \\  
           [\mathds{1}_{\lbrace \alpha_p \geq 0 \rbrace} |\alpha_p| \cdot (1 \oplus 1 \oplus x_{t+i} \oplus 1 \oplus x_{t+i-p})  + \\ \mathds{1}_{\lbrace \alpha_p < 0 \rbrace} |\alpha_p| \cdot (1 \oplus x_{t+i} \oplus 1 \oplus x_{t+i-p})] +   \frac{\beta}{2}
           = \\  
           P(1\oplus x_{t+i}\mid 1\oplus x_{t+i-p}).
        \end{aligned}
    \end{equation}
    Putting (\ref{factorization}) and (\ref{bitflipequality}) together:
     \begin{equation*}
     \begin{aligned}
          P(x_t,\ldots, x_{t+n}\mid x_{t-1},\ldots,x_{t-p})= \\ \prod_{i=0}^{n} P(x_{t+i}\mid x_{t+i-p})=
          \prod_{i=0}^{n} P(1\oplus x_{t+i}\mid 1\oplus x_{t+i-p})  =\\ P(1\oplus x_t,\ldots, 1\oplus x_{t+n}\mid 1\oplus x_{t-1},\ldots,1\oplus x_{t-p}).
     \end{aligned}   
    \end{equation*}
\end{proof}

\begin{proposition}\label{gbarpu+convergence}
    Let $\{X_t\}_{t\in\mathbb{Z}}$ be a uniform noise and positive gbAR(p) model. Then $\{X_t\}_{t\in\mathbb{Z}}$ satisfies the hypothesis of the Convergence Theorem, i.e. $\{X_t\}_{t\in\mathbb{Z}}$ is an irreducible and aperiodic stationary order-$p$ Markov
chain with finite state-space. 
\end{proposition}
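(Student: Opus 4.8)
The plan is to verify, one at a time, the four properties in the hypothesis of the Convergence Theorem~\ref{ConvergenceTheorem}, noting that most of them are either built into Definition~\ref{definition_gbarp} or follow from a single positivity observation.

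First, stationarity, the order-$p$ Markov property, and finiteness of the state space $S=\{0,1\}$ are part of the definition of a gbAR(p) model (Definition~\ref{definition_gbarp}), so there is nothing to prove there. The content of the statement is thus the irreducibility condition iv) and the aperiodicity condition v) of the corresponding definition.

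The crux is the observation that for a uniform noise and positive gbAR(p) model every one-step conditional probability is bounded away from zero: by Remark~\ref{transition_probs_gbAR},
\begin{equation*}
\begin{aligned}
P(x_t \mid x_{t-1},\ldots,x_{t-p}) &= \sum_{i=1}^p \alpha_i\,(1 \oplus x_t \oplus x_{t-i}) + \frac{\beta}{2} \\ &\geq \frac{\beta}{2} > 0
\end{aligned}
\end{equation*}
for all $x_{t-p},\ldots,x_t \in \{0,1\}$, since $\alpha_i \geq 0$ and $\beta \in (0,1]$. Condition iv) then follows immediately by taking $k=0$, since $P(X_t=x \mid X_{t-1}=x_1,\ldots,X_{t-p}=x_p)$ is exactly such a transition probability and hence strictly positive for every $x,x_1,\ldots,x_p \in S$. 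Equivalently, viewing $\{X_t\}_{t\in\mathbb{Z}}$ as a first-order Markov chain on $\{0,1\}^p$, the same bound lets one pass from any $p$-tuple to any other in $p$ steps with positive probability, which is the form of irreducibility the lifted chain in the Convergence Theorem really uses.

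For condition v) I would show that $n=1$ belongs to the set $\{\,n\geq 1 : P(X_{t+n}=x \mid X_t=x)>0\,\}$, so that its greatest common divisor is $1$. The conditioning is legitimate because $P(X_t=x)=\frac12>0$ by Remark~\ref{uniform_noise_probs}. Then, writing $P(X_{t+1}=x \mid X_t=x)$ as a weighted average, over the conditional distribution of $(X_{t-1},\ldots,X_{t-p+1})$ given $X_t=x$, of the transition probabilities $P(X_{t+1}=x \mid X_t=x,X_{t-1},\ldots,X_{t-p+1})$, each of which is $\geq\beta/2$ by the displayed bound while the weights sum to $1$, one obtains $P(X_{t+1}=x \mid X_t=x)\geq\beta/2>0$. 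This gives aperiodicity and finishes the argument. The only steps that require any care are this marginalization and the check that the conditioning events in iv) and v) have positive probability; everything else is a direct consequence of $\beta>0$.
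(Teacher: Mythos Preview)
Your proposal is correct and follows essentially the same route as the paper: both arguments reduce everything to the one-step positivity bound $P(x_t\mid x_{t-1},\ldots,x_{t-p})\geq \beta/2>0$ from Remark~\ref{transition_probs_gbAR}, use it directly for irreducibility, and then marginalize over the remaining $p-1$ past variables to obtain $P(X_{t+1}=x\mid X_t=x)>0$ for aperiodicity. Your version is in fact a touch more careful, since you use the correct conditional distribution of $(X_{t-1},\ldots,X_{t-p+1})$ given $X_t=x$ in the averaging step and explicitly justify that $P(X_t=x)>0$ via Remark~\ref{uniform_noise_probs}, whereas the paper writes the sum with the unconditional weights $P(x_{t-2},\ldots,x_{t-p})$ and bounds below by a single positive term.
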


\begin{proof}
    $\{X_t\}_{t\in\mathbb{Z}}$ is a stationary order-$p$ Markov
chain with finite state-space $S=\{0,1\}$ by Definition \ref{definition_gbarp}. Now,  $\{X_t\}_{t\in\mathbb{Z}}$ is irreducible because  for every $x_{t-p},\ldots, x_t\in S$ we have:
\begin{equation}\label{irreducibilitygbarpu+}
\begin{aligned}
     P(x_{t} \mid x_{t-1}, \ldots, x_{t-p}) =  \sum_{i=1}^{p} \alpha_i \cdot (1 \oplus x_{t} \oplus x_{t-i}) +\frac{\beta}{2}>0 .
\end{aligned}
\end{equation}
Finally, since $$\sum_{x_{t-2},\ldots, x_{t-p}}P(x_{t-2},\ldots, x_{t-p})=1,$$ there exist $y_{t-2},\ldots, y_{t-p}\in S$
 such that  $P(y_{t-2},\ldots, y_{t-p})>0$. Then, by equation (\ref{irreducibilitygbarpu+}):
\begin{equation*}
 \resizebox{\hsize}{!}{
      $
\begin{aligned}
     P(x_{t} \mid x_{t-1}) =\sum_{x_{t-2},\ldots, x_{t-p}}P(x_{t-2},\ldots, x_{t-p})P(x_t\mid x_{t-1},x_{t-2},\ldots, x_{t-p})\geq\\ P(y_{t-2},\ldots, y_{t-p})P(x_t\mid x_{t-1},y_{t-2},\ldots, y_{t-p})>0
\end{aligned}
$}
\end{equation*}
and we conclude that $\{X_t\}_{t\in\mathbb{Z}}$ is aperiodic.
\end{proof}

\begin{proof}[Proof of Proposition \ref{min_entropy_gbarp}]
   Since $\{X_t\}_{t\in\mathbb{Z}}$ satisfies the hypothesis of the Convergence Theorem \ref{ConvergenceTheorem} by Proposition \ref{gbarpu+convergence}, the equalities
           \begin{equation*}
        \resizebox{\hsize}{!}{$
        \begin{aligned}
    h_{\infty}(\{X_t\}_{t\in\mathbb{Z}})=\lim_{n\to \infty} \tilde{h}_\infty(X_t, \ldots, X_{t+n} \mid X_{t-1}, \ldots, X_{t-p})=\\
    \lim_{n\to \infty} h_\infty(X_t, \ldots, X_{t+n} \mid X_{t-1}, \ldots, X_{t-p})
        \end{aligned}     
         $}
    \end{equation*}   
    follow from Theorem \ref{limitavgminentropy}. Now, 
       \begin{equation*}
        \resizebox{\hsize}{!}{$
        \begin{aligned}
       h_{\infty}(X_t,\ldots, X_{t+n}\mid X_{t-1},\ldots, X_{t-p})=\\ -\frac{1}{n+1}\log \left[\max_{x_{t-p},\ldots,x_{t+n}}P(x_{t},\ldots, x_{t+n}\mid x_{t-1},\ldots, x_{t-p})\right]=\\
       -\frac{1}{n+1}\log \left[\max_{x_{t-p},\ldots,x_{t+n}}\prod_{i=0}^{n}P(x_{t+i}\mid x_{t+i-1},\ldots, x_{t+i-p})\right]=\\
        -\frac{1}{n+1}\log \left[\prod_{i=0}^{n}\left(\sum_{i=1}^{p} \alpha_i  +\frac{\beta}{2}\right)\right]=-\frac{1}{n+1}\log \left[\left(\sum_{i=1}^{p} \alpha_i  +\frac{\beta}{2}\right)^{n+1}\right]=\\
        -\frac{1}{n+1}\log \left[\left(1-\frac{\beta}{2}\right)^{n+1}\right]=
        -\log \left(1-\frac{\beta}{2}\right).
        \end{aligned}     
         $}
    \end{equation*} 
\end{proof}

 \printbibliography

\newpage

\section{Biography Section}

\begin{IEEEbiography}[{\includegraphics[width=1in,height=1.25in,clip,keepaspectratio]{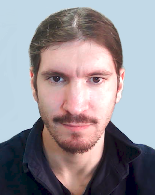}}]{Francisco Javier Blanco Romero}
received his B.Sc. in Physics from the Universidad Complutense de Madrid and is finalizing his M.Sc. degree in Robotics at Universidad Miguel Hernández de Elche. He has worked in the private sector as a software engineer and researcher, focusing on Internet of Things (IoT) projects in the environmental and mobility domains. Currently, he is a Research Support Technician at Universidad Carlos III de Madrid, involved in the Quantum-based Resistant Architectures and Techniques project.
\end{IEEEbiography}

\begin{IEEEbiography}
[{\includegraphics[width=1in,height=1.25in,clip,keepaspectratio]{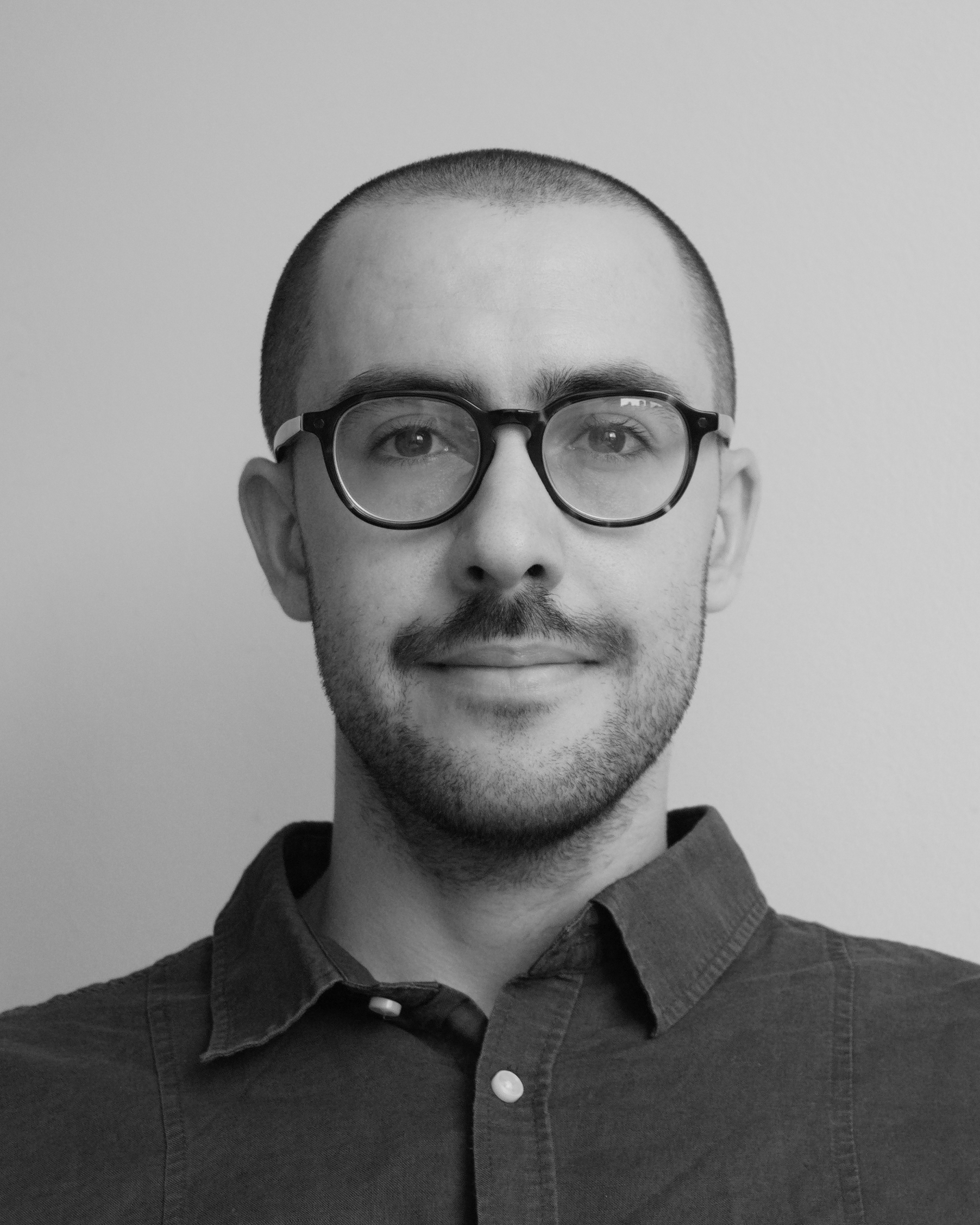}}]{Vicente Lorenzo}
received the B.Sc. degree in mathematics from the Universidad Complutense de Madrid, the M.Sc. degree in mathematics and applications from the Universidad Autónoma de Madrid and the Ph.D. degree in mathematics from Instituto Superior Técnico - Universidade de Lisboa. He was a Lecturer with Instituto Superior de Economia e Gestão - Universidade de Lisboa, Instituto Superior de Ciências do Trabalho e da Empresa - Instituto Universitário de Lisboa and Universidad CEU San Pablo. He is currently a Project Researcher with Universidad Carlos III de Madrid where he is involved in the Quantum-based Resistant Architectures and Techniques project.
\end{IEEEbiography}

\begin{IEEEbiography}
[{\includegraphics[width=1in,height=1.25in,clip,keepaspectratio]{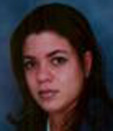}}]{Florina Almenares Mendoza}
received the M.Sc. degree in Telematic and the Ph.D. degree from the University Carlos III of Madrid, in 2003 and 2006, respectively. Since 2008, she has worked as an Associate Professor with the Department of Telematic Engineering, University Carlos III of Madrid. Her research interests include post-quantum cryptography, cybersecurity, machine learning, trust and reputation management models, identity management, secure architectures, and risk assessment. This research has been recently applied to ubiquitous computing and IoT, smart grids, and smart cities. She received the IEEE Chester Sall Award (2012).
\end{IEEEbiography}

\begin{IEEEbiography}
[{\includegraphics[width=1in,height=1.25in,clip,keepaspectratio]{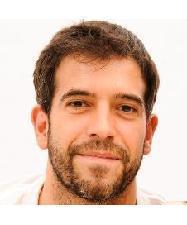}}]{Daniel Díaz Sánchez}
received the B. Eng. degree in Telecommunications in 2003, M.Sc. in Telematics in 2007, and Ph.D. in 2009 from Carlos III University of Madrid. From 2004 to 2006, he was a Researcher and became a Teaching Assistant in 2005. Since 2010, he has been an associate professor at the University Carlos III of Madrid. His research interests include distributed authentication/authorization, cybersecurity, distributed and fog computing, IoT, and related concepts. Dr. Díaz-Sánchez has received several awards and honors, including the Especial Ph.D. award from University Carlos III of Madrid (2009), the best Ph.D. thesis award on electronic commerce by the National Telecommunication Engineering Association (2009), and the IEEE Chester Sall Award (2012).
\end{IEEEbiography}

\vfill

\end{document}